\documentclass[12pt]{article}

\usepackage{amssymb}
\usepackage{amsthm,amsmath}
\usepackage{natbib}
\RequirePackage[colorlinks,citecolor=blue,urlcolor=blue]{hyperref}
\usepackage{graphicx}
\usepackage[colorinlistoftodos]{todonotes}
\usepackage{eucal} 
\usepackage{outlines}
\usepackage{mathtools}
\usepackage{booktabs}
\usepackage{multirow}
\usepackage{ragged2e}
\usepackage{ulem}
\usepackage{color}
\usepackage{rotating}
\usepackage{xr}
\usepackage{algorithm}
\usepackage{algorithmic}

\usepackage{color,soul}
\usepackage{subcaption}
\usepackage{bbm}
\usepackage{booktabs}
\usepackage{multirow}
\usepackage{pdflscape}
\usepackage{array}
\usepackage[font=small]{caption}

\newcolumntype{H}{>{\setbox0=\hbox\bgroup}c<{\egroup}@{}}

\addtolength{\oddsidemargin}{-.75in}%
\addtolength{\evensidemargin}{-.75in}%
\addtolength{\textwidth}{1.5in}%
\addtolength{\textheight}{1.3in}%
\addtolength{\topmargin}{-.8in}%

\newtheorem{theorem}{Theorem}
\newtheorem{lemma}{Lemma}

\newtheorem{definition}{Definition}

\DeclareMathOperator*{\argmin}{arg\,min}

\DeclareMathOperator*{\Var}{Var}
\DeclareMathOperator{\E}{\mathbb{E}}


\usepackage{etoolbox}
\usepackage{setspace}

\title{Ensembled sparse-input hierarchical networks for high-dimensional datasets}
\author{Jean Feng and Noah Simon}
\date{}

\begin{document}
\maketitle

\begin{abstract}
Neural networks have seen limited use in prediction for high-dimensional data with small sample sizes, because they tend to overfit and require tuning many more hyperparameters than existing off-the-shelf machine learning methods.
With small modifications to the network architecture and training procedure, we show that dense neural networks can be a practical data analysis tool in these settings.
The proposed method, \textbf{E}nsemble by \textbf{A}veraging \textbf{S}parse-\textbf{I}nput hi\textbf{ER}archical networks (EASIER-net), appropriately prunes the network structure by tuning only two $L_1$-penalty parameters, one that controls the input sparsity and another that controls the number of hidden layers and nodes.
The method selects variables from the true support if the irrelevant covariates are only weakly correlated with the response; otherwise, it exhibits a grouping effect, where strongly correlated covariates are selected at similar rates.
On a collection of real-world datasets with different sizes, EASIER-net selected network architectures in a data-adaptive manner and achieved higher prediction accuracy than off-the-shelf methods on average.
\end{abstract}

{\footnotesize{Keywords: Bayesian model averaging, Deep learning, Grouping effect, Lasso, Network pruning, Neural networks}}

\section{Introduction}

Deep neural networks are highly modular and expressive nonparametric models that can learn complicated relationships between their inputs and outputs.
Although they are state-of-the-art in many complex prediction problems where large amounts of data are available, their utility for analyzing datasets with few observations has been limited, particularly when the number of dimensions is high and the signal to noise ratio is low.
There are two reasons why data analysts usually dismiss deep neural networks in these settings.
First, they easily overfit to the training data.
Second, the data analyst needs to tune many hyperparameters, including the number of hidden layers, the number of hidden nodes per layer, regularization parameters, learning rates for the optimization algorithm, among others \citep{Bengio2012-zq}.

Nevertheless, the assumption that deep learning is too brittle or burdensome in these settings is worth revisiting.
Many methods have been developed to reduce the generalization error of deep neural networks \citep{Goodfellow2016-sw}, such as the use of rectified linear units \citep{Glorot2011-pd}, stochastic gradient descent, regularization, and ensemble methods.
In fact, \citet{Zhang2017-ws} has urged the research community to rethink generalization error for neural networks, since these models were able to both achieve low generalization error \textit{and} memorize the training data.
In addition, neural networks are easier than ever to implement and train.
Many powerful automatic differentiation packages are freely available (e.g. {\tt Tensorflow} \citep{Abadi2016-zh} and {\tt PyTorch} \citep{Paszke2019-tz}) and computing power has grown exponentially in the past few decades.

Although most analyses of deep learning have been performed in large image datasets, there is now a growing body of evidence that neural networks can also be used for smaller sample sizes.
In an analysis of over a hundred UCI classification datasets, \citet{Olson2018-vs} showed that ensemble neural networks with zero hyperparameter tuning do only slightly worse than random forests on average.
An alternate approach pruned input weights using $L_1$ and $L_2$ penalties, resulting in sparse-input neural networks (SPINN), and outperformed random forests and the Lasso in certain cases \citep{Feng2019-nw}.
Finally, Bayesian neural networks have been used for variable selection in high-dimensional datasets \citep{Neal1996-at, Guyon2005-fp, Liang2018-ev}.

While these methods are promising, they are still insufficient.
We found that the performance of network ensembles in \citet{Olson2018-vs} heavily depends on the dataset and can be quite poor without any network structure learning.
SPINN requires tuning five or more hyperparameters, which is time consuming even with the help of modern hyperparameter optimization methods (see e.g., \citet{Snoek2012-zd}).
Bayesian networks even more computationally expensive than their non-Bayesian counterparts.

Our goal is to design a procedure that is competitive with off-the-shelf machine learning algorithms and  requires only a ``reasonable'' amount of hyperparameter tuning.
As many popular statistical procedures require tuning one or two hyperparameters (see e.g. \citet{Zou2005-wm} and \citet{Simon2013-tq}), we define tuning two hyperparameters as ``reasonable."
Training neural networks typically involves an outer search over hyperparameters such as the entire network structure and an inner training procedure for each candidate hyperparameter set.
Here, we design the outer search to tune two \textit{summary} features of the network structure that correspond to major sources of variation for generalization error: input sparsity and network size.
The inner procedure is then responsible for both selecting the specific network structure and fitting model parameters.

We propose a method for training a single network, \textbf{S}parse-\textbf{I}nput hi\textbf{ER}archical network (SIER-net), and its ensemble version, \textbf{E}nsemble by \textbf{A}veraging \textbf{S}parse-\textbf{I}nput hi\textbf{ER}archical networks (EASIER-net).
In SIER-net, the network is initialized with a fixed architecture with many hidden layers (deep), many hidden nodes per layer (wide), skip-connections from each layer to the output, and an input filter layer.
Network structure learning is performed solely by tuning two $L_1$ penalty parameters, one to modulate the number of selected inputs and another for the number of hidden nodes and layers.
Although many nodes and layers are included in the initial topology, these are perhaps better thought of as ``candidate'' nodes and layers: The combination of penalties and skip-connections allows the network to data-adaptively determine appropriate widths and depth (up to the maximum network size indicated).

By deriving probability bounds on the variable selection accuracy, we show that SIER-net is likely to select covariates in the true support that have high predictive value relative to the remaining covariates.
On the other hand, if the covariates are highly correlated, we show that it is impossible for \textit{any} procedure to accurately recover the true support.
Since support recovery is unrealistic in these settings, we show that ensembling these networks using EASIER-net lets us quantify the uncertainty of the variable selection procedure.
Moreover, in cases where correlated variables belong to a meaningful group (e.g. gene expression values from the same pathway), EASIER-net exhibits a grouping effect, where covariates in the same group are selected at similar rates.

We evaluate EASIER-net on a collection of regression and classification tasks from the UCI Machine Learning Repository, which were chosen to represent a variety of dataset sizes.
EASIER-net consistently achieves higher prediction accuracy than other neural network estimators.
In addition, EASIER-net achieves higher prediction accuracy on average compared to off-the-shelf machine learning methods and is competitive in computation time.

The paper is organized as follows.
Section~\ref{sec:related} discusses related work.
In Section~\ref{sec:method}, we define SIER-net and analyze its variable screening accuracy.
We then introduce our ensembled estimator and discuss the effects of ensembling using a Bayesian perspective.
Finally, we present empirical analyses of EASIER-net on both simulated and real data in Section~\ref{sec:empirical}.
Code for fitting EASIER-net is available at \url{https://github.com/jjfeng/easier_net}.

\section{Related Work}
\label{sec:related}

There is a growing body of literature on applying neural networks to high-dimensional datasets with low sample sizes.
Existing methods typically rely on some form of regularization to mitigate the neural network's tendency to overfit.
Sparsity-inducing penalties, like the lasso and group lasso, can be applied to the network weights to encourage feature selection and drastically improve prediction accuracy \citep{Scardapane2017-yj,Yoon2017-ao, Feng2019-nw}.
A similar idea in the Bayesian literature is to apply a prior over the model weights, such as through an ``Automatic Relevance Determination''  (ARD) prior \citep{MacKay1996-ey, Neal1996-at, Neal2006-jb} or a Bernoulli prior for the probability that a weight is included \citep{Liang2018-ev}.
An alternative idea inspired by the success of random forests is to regularize neural networks by ensembling \citep{Olson2018-vs}.
Even though these methods are promising, neural networks still have limited utility in these settings.
Many of these methods either require substantially more hyperparameter tuning or offer negligible gains in prediction accuracy compared to existing off-the-shelf machine learning methods.
This paper combines three simple ideas --- sparsity, skip-connections, and ensembling --- to improve prediction accuracy and decrease hyperparameter tuning requirements.

Applying sparsity-inducing penalties to perform network structure learning is a well-accepted practice \citep{Hanson1989-cj, LeCun1990-ng, Yu2012-ka, Han2015-wz, Wen2016-lo}.
In addition to lower computational and memory requirements, sparser networks often achieve lower generalization error \citep{Louizos2018-mx, Yoon2017-ao, Feng2019-nw}.
However, these penalization-based methods typically require tuning many penalty parameters and/or network structure hyperparameters.
In contrast, EASIER-net requires tuning only two $L_1$ penalty parameters and select an appropriate network structure for each candidate penalty parameter set.

Skip-connections are a classical technique for expanding the neural network model class to include simpler models like linear and logistic regression (e.g. Chapter 5 of \citet{Ripley1996-ub}).
This technique often improves prediction accuracy on high-dimensional datasets with small sample sizes by leveraging the performance of simple parametric models.
As such, \citet{Liang2018-ev} used skip-connections to combine Bayesian linear regression with Bayesian neural networks with a single hidden layer.
Our method EASIER-net can fit even deeper networks and adaptively chooses the depth using a Lasso penalty.
Although skip-connections have also been used to improve gradient propagation in ultra-deep neural networks \citep{He2016-qz, Huang2017-bt}, our architecture is designed for shallower networks with five or so hidden layers, which tend to perform well on smaller datasets.

Ensembling has been shown to improve the generalization error \citep{Olson2018-vs} and uncertainty quantification of neural networks \citep{Lakshminarayanan2017-wn}.
The benefit of ensembling has been explained from many perspectives, including its ability to reduce model variance \citep{Breiman1996-bf} and its similarity to Bayesian posterior inference and model averaging \citep{Bardsley2014-bp, Lu2017-kz, Pearce2020-ee, Wilson2020-jw}.
However, none of these works have discussed the interaction between variable selection procedures and ensembling.
In this work, we show that ensembling sparse-input neural networks better reflects the uncertainty of the true support and induces a ``grouping effect'' for correlated predictors, a feature commonly associated with the elastic net \citep{Zou2005-wm}.

\section{Method}
\label{sec:method}

\subsection{Notation}
Consider the usual prediction setup with covariates $\boldsymbol{x} = (x_1,...,x_d) \in \mathcal{X} \subseteq \mathbb{R}^d$ and response $y \in \mathcal{Y}$.
$\mathcal{Y}$ may be real-valued in the case of regression and categorical in the case of classification.
Suppose we observe $n$ independent and identically distributed (IID) observations, denoted $(\boldsymbol{x}_i, y_i)$ for $i = 1,...,n$.

The typical dense neural network with $L$ layers is defined as follows.
Let $d_l$ be the number of nodes in layer $l$ and $\boldsymbol{z}_l \in \mathbb{R}^{1 \times d_l}$ be the node values, where $l = 1$ denotes the input layer (i.e. $\boldsymbol{z}_1 = \boldsymbol{x}$) and $l = L$ denotes the output.
For simplicity, we suppose all hidden nodes use the same nonlinear activation function $\phi: \mathbb{R} \mapsto \mathbb{R}$.
Here, we set $\phi$ as the rectified linear unit $a \mapsto a_+$, though one could use alternative activation functions like the sigmoid and hyperbolic tangent.
Each non-output layer $l$ is associated with a weight matrix $W_{l} \in \mathbb{R}^{d_{l} \times d_{l + 1}}$ and bias vector $b_{l} \in \mathbb{R}^{1 \times d_{l + 1}}$ that is used to linearly combine its inputs.
The specified activation function is then applied to create the input for the subsequent layer.
For $l = 1,...,L - 2$, we define $z_{l + 1} = \phi\left(z_{l} W_{l} + b_{l} \right)$ where $\phi$ is applied element-wise.
The final model output is defined as $\boldsymbol{z}_{L} = \phi_{\text{out}}(\boldsymbol{z}_{L - 1} W_{L - 1} + \boldsymbol{b}_{L - 1})$.
For regression problems, $d_L = 1$ and $\phi_{\text{out}}: \mathbb{R}^{1 \times 1} \mapsto \mathbb{R}$ is typically the identity function; For classification problems, $d_L$ is the number of classes and $\phi_{\text{out}}: \mathbb{R}^{1 \times d_L} \mapsto [0,1]^{d_L}$ is typically the softmax function.

\subsection{Sparse-input hierarchical networks}

\begin{figure}
	\centering
	\begin{subfigure}{0.45\linewidth}
	\includegraphics[width=\linewidth]{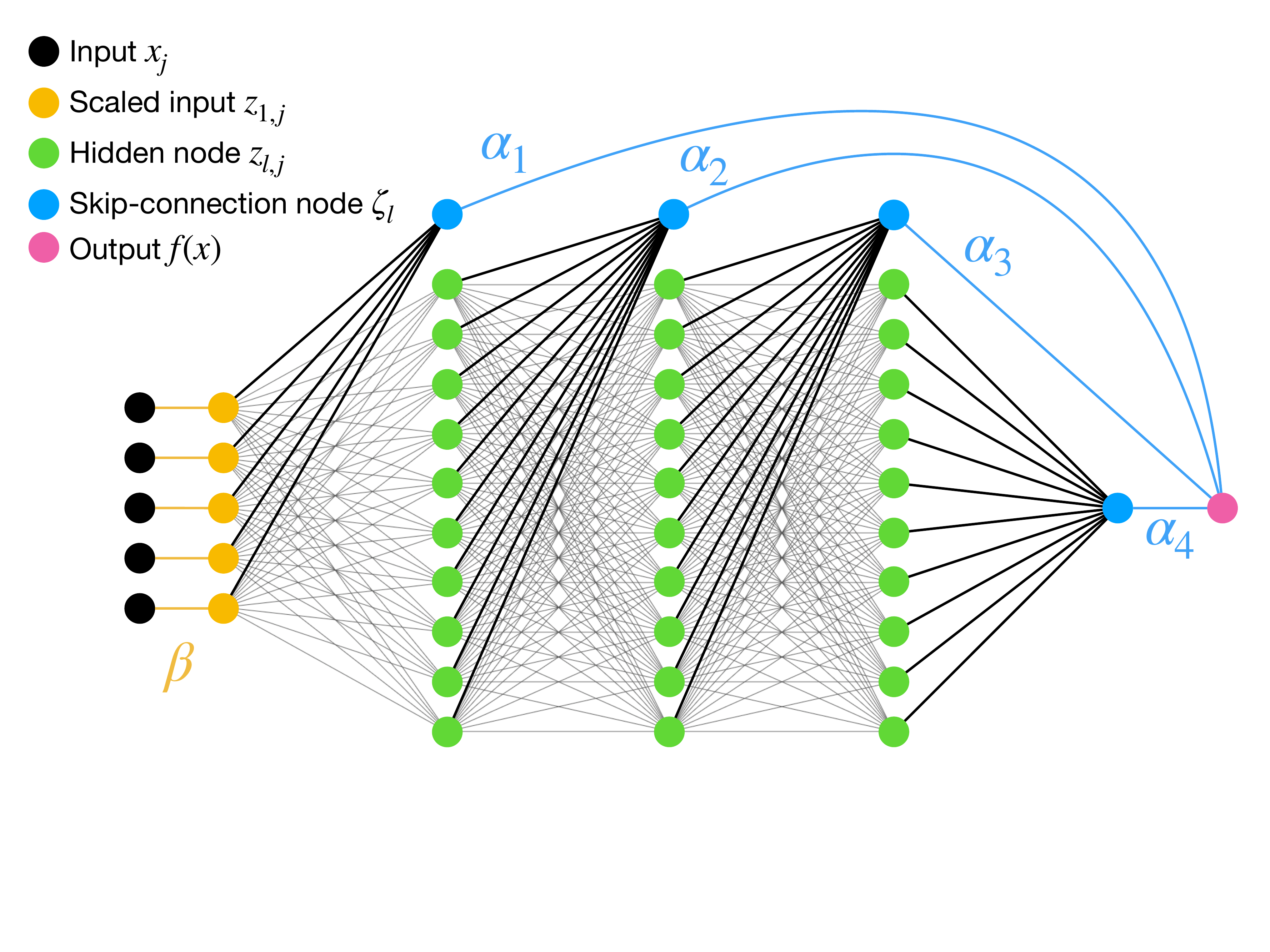}
	\caption{Initial network structure}
	\end{subfigure}
	\begin{subfigure}{0.4\linewidth}
	\includegraphics[width=\linewidth]{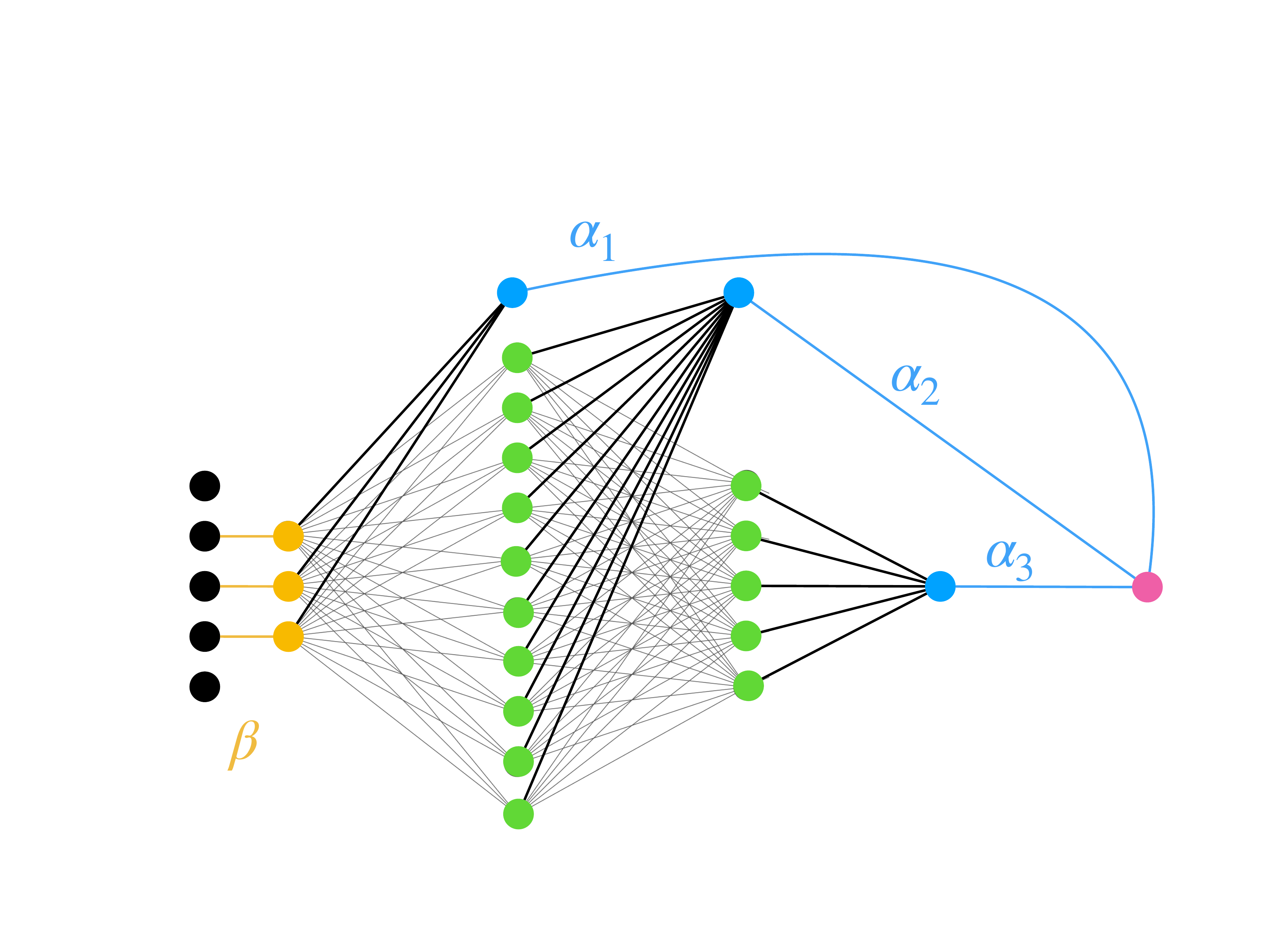}
	\caption{Fitted network}
	\end{subfigure}
	\caption{
	A sparse-input hierarchical network with three hidden layers.
	The network augments a dense neural network by adding i) an input filter layer (yellow edges) that scales the inputs by parameter $\beta$ and ii) skip-connections from each layer to the output.
	The output is the weighted average of the blue nodes with weights $|\alpha_l|$ for $l = 1,..,4$.
	The network is initialized with non-zero weights along all edges.
	During training, many of these edges are set to zero by lasso penalties in the objective, which may prune away inputs, nodes, and entire layers.
	}
	\label{fig:nnet}
\end{figure}

Our method, \textbf{S}parse-\textbf{I}nput hi\textbf{ER}archical networks (SIER-net), modifies a dense neural network by adding an input filter layer and skip-connections from each intermediate layer to the output (Figure~\ref{fig:nnet}).
The structure of the network and the estimation procedure in SIER-net are specially designed to facilitate network structure learning in a data-adaptive manner: the network is first initialized per some maximum allowable network structure and then network nodes and layers are pruned by minimizing an $L_1$-penalized empirical loss.
The network is parameterized by weights $W$ and biases $b$ in the base network, input scaling factors $\beta \in \mathbb{R}^{d}$, skip-connection factors $\alpha \in \mathbb{R}^{L -1}$, as well as additional weights $W'$ and biases $b'$ associated with the skip-connections.
We denote the network as $f_{W, b, \beta, \alpha}$ and provide the full set of equations defining its output in Section~\ref{sec:define} in the Appendix.
For regression problems, training involves minimizing the following objective with loss function $\ell: \mathbb{R} \times \mathbb{R} \mapsto \mathbb{R}$:
\begin{align}
\begin{split}
\min_{W, b, \beta, \alpha} &\  \frac{1}{n}\sum_{i=1}^n \ell \left(f_{W, b, \beta, \alpha}(x_i), y_i \right ) + \lambda_1 \left(
\| \beta \|_1 + \| W'_1 \|_1
\right)  + \lambda_2 \left(
\sum_{l = 1}^{L - 2} \|W_{l}\|_1 + \sum_{l = 2}^{L - 1} \|W'_{l}\|_1
\right)
\end{split}
\label{eq:nn_obj}
\end{align}
where $\lambda_1 \ge 0$ and $\lambda_2 \ge 0$ are penalty parameters.
For classification problems, the $L_1$ penalties are applied to the bias parameters as well to regularize the marginal probability estimates for each class.
Broadly speaking, the first set of $L_1$ penalties scaled by $\lambda_1$ controls the input sparsity and the second set of $L_1$ penalties scaled by $\lambda_2$ controls the number of active layers and hidden nodes.

The input filter layer replaces the first layer with scaled inputs $z_{1, i} = \beta_i x_i$ for $i = 1,...,d$, where $\boldsymbol{\beta}$ is a learnable parameter.
Since the lasso penalty encourages parameters to be exactly equal to zero, minimizing \eqref{eq:nn_obj} will encourage the model to depend on a smaller set of covariates.
So as $\lambda_1$ increases, the size of the fitted model's support will decrease.

To add skip-connections, we add $d_L$ additional nodes per layer, denoted by $\boldsymbol{\zeta}_{l}\in\mathbb{R}^{1 \times d_L}$, to layers $l = 1,\cdots, L - 1$, where $\boldsymbol{\zeta}_{l} = \boldsymbol{z}_{l} W_{l}'  + \boldsymbol{b}_{l}'$ for $W_{l}' \in \mathbb{R}^{d_{l} \times d_L}$ and $\boldsymbol{b}_{l}' \in \mathbb{R}^{1 \times d_L}$.
(Recall that $d_L$ is the number of output nodes from the network, so $d_L = 1$ in the regression setting.)
For each $l = 1,...,L - 1$, nodes $\boldsymbol{\zeta}_l$ are associated with scalar weight $\alpha_{l}$.
We then set the network's output as a function of the weighted average of these additional nodes, i.e.
\begin{align}
f_{W, b, \beta, \alpha}(\boldsymbol{x}) :=
\boldsymbol{z}_{L} = \phi_{\text{out}}\left(
\sum_{l=1}^{L-1} \frac{|\alpha_{l}|}{\sum_{l' = 1}^{L - 1}|\alpha_{l'}|} \boldsymbol{\zeta}_{l}
\right).
\label{eq:sum_connections}
\end{align}
Normalizing the skip-connection weights is useful in a few ways.
First, normalizing the skip-connection weights serves as a form of regularization.
Second, we can modulate the normalized skip-connection weights by varying $\lambda_2$.
In particular, as $\lambda_2$ decreases, the number of active layers and nodes increases and, consequently, the model tends to assign larger skip-connection weights to layers closer to the output.
Finally, the normalized weights can be roughly interpreted as the conditional importance of each layer.
For a formal comparison, we evaluate the proportion of variance contributed by layer $l = 1,....,L - 1$ to the final network output, i.e.
\begin{align}
\Var\left(|\alpha_{l}| \zeta_{l}\right )\big/
\Var\left(\sum_{l'=1}^{L - 1} |\alpha_{l'}| \zeta_{l'}\right ).
\label{eq:var_contrib}
\end{align}

With the aforementioned modifications, the neural network is naturally hierarchical, where simpler models can be represented with fewer nonzero weights.
Consider a few special cases in the regression setting.
If all network parameters are set to zero except for $\boldsymbol{\beta}$, $\alpha_1$, $W'_1$, and $\boldsymbol{b}'_1$, then the fitted network is a linear model.
More generally, if all of the parameters associated with layers $l \ge L'$ are zero for some $L' < L$, then $f_{W, b, \beta, \alpha}$ is a neural network with only $L'$ hidden layers.
Finally, if many entries of $\boldsymbol{\beta}$ are zero, the fitted network is sparse with respect to the number of inputs.
By design, the ordering in the hierarchy corresponds to the axes defined by penalty parameters $\lambda_1$ and $\lambda_2$.

To minimize \eqref{eq:nn_obj} (or find a local minimum), the simplest approach is to run the popular stochastic gradient-based optimization algorithm {\tt Adam} \citep{Kingma2015-oy}.
However, {\tt Adam} does not shrink network weights exactly to zero.
So to obtain a network that is actually sparse, we recommend running a few more steps of (batch) proximal gradient descent \citep{Parikh2014-kd} after {\tt Adam} converges.
Recall that proximal gradient descent is a form of projected gradient descent used to solve non-differentiable convex optimization problems; therefore, it should converge within a few iterations, as \eqref{eq:nn_obj} should be locally quadratic after {\tt Adam} converges.
This two-step optimization procedure is shown in Algorithm~\ref{algo:prox_gd}.
At each iteration of proximal gradient descent, we perform a batch gradient update with respect to the empirical loss, followed by the proximal operator on the two $L_1$ penalties.
The proximal operator, in this case, is very efficient, since it is the soft-thresholding function
\begin{align}
S_{\lambda}(\theta) = \begin{cases}
\theta - \lambda & \text{if } \theta > \lambda \\
0 & \text{if } |\theta| \le \lambda \\
\theta + \lambda & \text{if } \theta < -\lambda
\end{cases}.
\label{eq:soft_thres}
\end{align}

\begin{algorithm}
\begin{algorithmic}
	\STATE{Run Adam until convergence}
	\FOR {$k = 1,2,3...$}
	    \STATE{$\eta^{(k + 1)} := \eta^{(k)} - t_k \nabla_{\eta} \frac{1}{n} \sum_{i=1}^n \ell(f_{\eta}(x_i), y_i) $}
	    \FOR {each parameter $\eta_i$ with an $L_1$ penalty with penalty parameter $\lambda_i$}
	        \STATE{$\eta_i^{(k + 1)} := S_{\lambda_i t_k} (\eta_i^{(k + 1)})$}
	    \ENDFOR
	\ENDFOR
\end{algorithmic}
\caption{SIER-net}
\label{algo:prox_gd}
\end{algorithm}

\subsubsection{Variable screening}
\label{sec:some_theory}
In this section, we derive probability bounds to understand the variable screening properties of SIER-net.
We show that the expected number of false negatives is intimately tied to the predictive value of each variable in the true support, relative to the remaining covariates.
We formalize this idea by defining a predictive value function $\gamma$ in Definition~\ref{def:pred} and then show how it appears in the probability bounds on the number of false negatives.
Our results show that SIER-net is effective at support screening when the predictive value of the true support is high, which is maximized when the covariates are independent.
We then show that no estimation procedures can accurately perform support screening or recovery if the predictive value is low, which occurs when the covariates are highly correlated.
Given this impossibility result, we will introduce an ensembling procedure in Section~\ref{sec:ensembling} that quantifies the uncertainty of the true support instead.
The proofs for results in this section are given in Section~\ref{sec:proofs} of the Appendix.

We start by introducing some notation and definitions.
Let $\mathcal{F}$ be a class of functions that map from $\mathcal{X}$ to $\mathcal{Y}$.
For any function $f \in \mathcal{F}$, define its support $s(f)$ as the set $\tilde{s} \subseteq \{1,...,d\}$ such that $f(x) = f(x')$ for all $x, x' \in \mathcal{X}$ where $x_i = x'_i$ for all $i \in \tilde{s}$ (If there are multiple possible supports, use a random rule to select one among those with the smallest cardinality).
Also, for any set $\tilde{s} \subseteq \{1,...,d\}$, let $\mathcal{F}_{\tilde{s}}$ be the set of functions $f \in \mathcal{F}$ with support $s(f) \subseteq \tilde{s}$ and let $|\tilde{s}|$ denote its cardinality.
Suppose we observe IID observations $(X_i,Y_i)$ for $i = 1,...,n$ drawn from joint distribution $P$.
Let its conditional mean be denoted $\mu_P(x) = \E_P[Y | X= x]$ and, for convenience, let the true support be denoted $s^*_P = s(\mu_P)$.

Let $\Theta_{L, B_0, B_1}$ be the set of all sparse-input hierarchical network parameters with up to $L$ layers, infinity norm up to $B_0$, and total variation norm up to $B_1$.
Here, we use the total variation norm of a neural network as defined in \citet{Barron2019-sb}, which scales with the size of the support and number of hidden nodes.
Given a neural network estimator $f_{\hat{\theta}_n}$, our goal is to characterize the probability of missing at least $m$ relevant covariates, i.e.
\begin{align}
P\left(
\left|
s^*_P \setminus s(f_{\hat{\theta}_n})
\right|
\ge m
\right).
\label{eq:prob_fn}
\end{align}
For simplicity, we suppose the inputs are scaled such that $\mathcal{X} = [-1,1]^d$, consider only the regression setting, and let $\ell$ be the squared error loss.
Moreover, suppose that $\epsilon = Y - \mu(X)$ is a Gaussian random variable with variance $\sigma^2$ and the true regression function is a neural network, i.e. $\mu_P = f_{\theta^*}$ for some $\theta^* \in \Theta_{L, B_0, B_1}$.
The results here can be generalized to sub-Gaussian noise and alternative loss functions.

We quantify the predictive value of individual variables and groups of variables using the following definition.
\begin{definition}
	For set $\tilde{s} \subseteq \{1,...,d\}$, let $f_{\tilde{s}}^*$ be the population risk minimizer with support $\tilde{s}$, i.e.
	\begin{align}
	f_{\tilde{s}}^* = \argmin_{f \in \mathcal{F}_{\tilde{s}}} \E_P \ell(f(X), Y).
	\end{align}
	For integers $m = 1,...,d$, denote the minimum achievable excess risk when omitting $m$ variables from the true support as
	\begin{align}
	\gamma(m; P) = \min_{\tilde{s} \subseteq \{1,...,d\} : |s^*_P \setminus \tilde{s}| = m } \E_P \left\{ \ell(f^*_{\tilde{s}}(X), Y) - \ell(\mu_P(X), Y) \right \}.
	\label{eq:gamma}
	\end{align}
	\label{def:pred}
\end{definition}
\noindent Specifically, a large value for $\gamma(m; P)$ means that any model that fails to include $m$ variables from the true support will have significantly worse prediction accuracy.
Put another way, $\gamma(m; P)$ is large if the variables outside the true support are not highly predictive of the response $Y$.
Next, we show that our ability to perform variable screening is fundamentally tied to $\gamma$.

We derive the following upper bound of \eqref{eq:prob_fn} by combining our definition of $\gamma$ with standard techniques for bounding the excess risk of the constrained empirical risk minimizer.
Although the previous section described fitting the network by minimizing the penalized empirical risk in \eqref{eq:nn_obj}, we analyze the constrained minimizer here since we can directly apply results from \citet{Barron2019-sb}.
Moreover, by using the method of Lagrangian multipliers, the constrained problem can be transformed to a similar structure as the penalized form.
For appropriately chosen penalty parameters, the two estimation procedures can give similar estimates and, in fact, are equivalent in convex settings.
\begin{theorem}
	Given $n$ observations, let $\hat{\theta}_n$ be a global minimizer of the empirical risk
	\begin{align}
	\hat{\theta}_n \in \argmin_{\theta \in \Theta_{L, B_0, B_1}} \frac{1}{n} \sum_{i=1}^n \ell \left (f_{\theta}(x_i), y_i \right ).
	\label{eq:thrm_def}
	\end{align}
	Then for any $m = 1,\ldots,d$ and $\delta \ge 0$, we have
	\begin{align}
	\begin{split}
	P\left(
	\left |s^*_P \setminus s(f_{\hat{\theta}_n}) \right | \ge m
	\right)
    \le
    &\  \mathbbm{1}\left\{
	B_1 (4B_0  + \sigma) L \sqrt{\frac{2(L \log(2) + \log (2d))}{n}} + 2\delta
	\ge \gamma(m; P)
	\right \}\\
	&\ + \exp\left(
	-\frac{n \delta^2}{2 B_0^4}
	\right) + 2\exp\left(
	-\frac{n \delta^2}{8B_0^2 \sigma^2}
	\right)
	.
	\end{split}
	\label{eq:prob_support}
	\end{align}
	\label{thrm:upper}
\end{theorem}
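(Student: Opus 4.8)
The plan is to translate the variable-screening event into a statement about the excess population risk of $f_{\hat\theta_n}$, and then control that excess risk by combining the Barron (2019) complexity bound for total-variation–constrained networks with symmetrization and concentration. Throughout write $R(f) = \E_P \ell(f(X),Y)$, $R_n(f) = \frac1n\sum_{i=1}^n \ell(f(x_i),y_i)$, and $\|g\|_n^2 = \frac1n\sum_{i=1}^n g(x_i)^2$. First I would observe that $m\mapsto\gamma(m;P)$ is nondecreasing: given $\tilde s'$ with $|s^*_P\setminus\tilde s'| = m+1$, enlarge it to some $\tilde s\supseteq\tilde s'$ with $|s^*_P\setminus\tilde s| = m$; then $\mathcal F_{\tilde s'}\subseteq\mathcal F_{\tilde s}$, so the excess risk of $f^*_{\tilde s}$ is at most that of $f^*_{\tilde s'}$, and minimizing over $\tilde s'$ gives $\gamma(m;P)\le\gamma(m+1;P)$. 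On the event $\{|s^*_P\setminus s(f_{\hat\theta_n})|\ge m\}$, put $\tilde s = s(f_{\hat\theta_n})$, which omits some $m'\ge m$ variables of $s^*_P$; since $f_{\hat\theta_n}\in\mathcal F_{\tilde s}$ and $f^*_{\tilde s}$ minimizes population risk over $\mathcal F_{\tilde s}$, we get $R(f_{\hat\theta_n}) - R(\mu_P)\ge R(f^*_{\tilde s}) - R(\mu_P)\ge\gamma(m';P)\ge\gamma(m;P)$. Hence $P(|s^*_P\setminus s(f_{\hat\theta_n})|\ge m)\le P(R(f_{\hat\theta_n}) - R(\mu_P)\ge\gamma(m;P))$, and the rest is a pure excess-risk bound.

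Next I would decompose the excess risk. Since $\mu_P = f_{\theta^*}$ with $\theta^*\in\Theta_{L,B_0,B_1}$ and $\hat\theta_n$ minimizes $R_n$, the middle term in $R(f_{\hat\theta_n}) - R(\mu_P) = [R(f_{\hat\theta_n}) - R_n(f_{\hat\theta_n})] + [R_n(f_{\hat\theta_n}) - R_n(\mu_P)] + [R_n(\mu_P) - R(\mu_P)]$ is nonpositive. Expanding the squared loss with $y_i = \mu_P(x_i)+\epsilon_i$, the $\frac1n\sum_i\epsilon_i^2$ fluctuations in the two remaining brackets cancel exactly — this is precisely where the assumption $\mu_P\in\mathcal F$ enters — and the population cross term $\E_P[\epsilon(f-\mu_P)(X)]$ vanishes, leaving $R(f_{\hat\theta_n}) - R(\mu_P)\le U_1 + U_2$ with $U_1 = \sup_{\theta\in\Theta_{L,B_0,B_1}}\bigl(\E_P[(f_\theta-\mu_P)^2(X)] - \|f_\theta-\mu_P\|_n^2\bigr)$ and $U_2 = 2\sup_{\theta\in\Theta_{L,B_0,B_1}}\frac1n\sum_{i=1}^n\epsilon_i(f_\theta-\mu_P)(x_i)$.

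Both suprema range over $\{f_\theta-\mu_P:\theta\in\Theta_{L,B_0,B_1}\}$, whose elements are bounded by $2B_0$ and have total-variation norm at most $2B_1$. For the expectations I would symmetrize: $\E U_1$ is controlled, via Talagrand's contraction inequality ($t\mapsto t^2$ is $4B_0$-Lipschitz on $[-2B_0,2B_0]$), by a multiple of $B_0$ times the Rademacher complexity of the network class, which the Barron (2019) bound caps at a multiple of $B_1 L\sqrt{(L\log 2+\log(2d))/n}$; $\E U_2$ is likewise a multiple of $\sigma$ times the same quantity, and adding the two yields the deterministic term $B_1(4B_0+\sigma)L\sqrt{2(L\log 2+\log(2d))/n}$. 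For concentration: $U_1$ is a supremum of averages of $[0,4B_0^2]$-bounded summands, so McDiarmid's inequality gives $P(U_1\ge\E U_1+\delta)\le\exp(-n\delta^2/(2B_0^4))$; in $U_2$ the noise enters linearly against functions bounded by $2B_0$, so viewing $U_2$ as a Lipschitz function of $(\epsilon_1,\dots,\epsilon_n)$ and applying Gaussian concentration gives $P(U_2\ge\E U_2+\delta)\le 2\exp(-n\delta^2/(8B_0^2\sigma^2))$. On the intersection of these two events — of probability at least $1 - \exp(-n\delta^2/(2B_0^4)) - 2\exp(-n\delta^2/(8B_0^2\sigma^2))$ — we get $R(f_{\hat\theta_n}) - R(\mu_P)\le B_1(4B_0+\sigma)L\sqrt{2(L\log 2+\log(2d))/n} + 2\delta$, so $P(|s^*_P\setminus s(f_{\hat\theta_n})|\ge m)$ is at most the probability this right-hand side is $\ge\gamma(m;P)$, which is at most $\mathbbm 1\{\cdot\ge\gamma(m;P)\}$ plus the two tail probabilities — exactly the claimed bound.

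The main obstacle is the concentration step: because $Y$ (hence the loss $\ell(\mu_P(X),Y) = \epsilon^2$) is unbounded, one cannot apply a bounded-differences argument to $\ell\circ\mathcal F$ directly. The decomposition above is engineered so that the $\epsilon^2$ fluctuations cancel and the residual randomness appears only in $U_2$, which is linear in $\epsilon$ and hence genuinely sub-Gaussian, letting the two tails split cleanly into the two exponential terms with the stated constants. A secondary issue is constant bookkeeping: one must verify that the shifted class $\{f_\theta-\mu_P\}$ retains the regularity needed to invoke the Barron (2019) Rademacher bound with exactly the $L$, $\log 2$, and $\log(2d)$ factors appearing in the statement, and that the symmetrization and contraction steps collapse the leading constant to $4B_0+\sigma$.
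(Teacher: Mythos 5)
Your proposal follows essentially the same route as the paper's own proof: reduce the event of missing $m$ relevant variables to the event that the excess population risk exceeds $\gamma(m;P)$, apply the ERM basic inequality so that the $\epsilon^2$ terms cancel and only the squared-difference and noise cross terms remain, bound their expectations by $4B_0$ times the Rademacher complexity and $\sigma$ times the Gaussian complexity via the \citet{Barron2019-sb} bound, and obtain the two exponential tails by bounded-differences and Gaussian concentration. Your constant bookkeeping (e.g.\ the factor $2$ on the cross term and the exponent $n\delta^2/(2B_0^4)$) is loose in exactly the same places the paper's proof is, so there is no substantive discrepancy.
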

\noindent Using asymptotic notation, the upper bound in \eqref{eq:prob_support} is nontrivial (i.e. smaller than one) only if the number of observations $n \gtrsim 1/\gamma^2(m)$.
Thus, the upper bound is small when the predictive value of the true support relative to the remaining covariates is high.
These conditions are nonparametric generalizations of those used for support screening and variable selection in linear models, i.e. incoherence and beta-min conditions and variants thereof \citep{Buhlmann2011-ko, Wainwright2019-sp}.

In addition, Theorem~\ref{thrm:upper} reveals how variable screening performance and prediction accuracy depend on the model capacity.
In particular, we see that the upper bound grows quickly with respect to $L$, $B_0$, and $B_1$.
So, one should therefore constrain the network structure search to networks with depth, infinity norm, and total variation norm no more than that of the true model.
Indeed, we designed SIER-net based on this theoretical result:
the second set of $L_1$ penalties in \eqref{eq:nn_obj} helps us control network depth;
the sum of all the penalties controls the infinity norm;
and because the first set of penalties controls the input sparsity and the second controls the number of hidden nodes, their sum controls the total variation norm.

Next, we characterize how the best achievable performance by any support screening  procedure depends on $\gamma$.
Consider any non-negative monotonically non-decreasing function $\gamma$.
For any $k \in \{1,...,d\}$, let $\mathcal{P}_{k, \gamma}$ be the set of distributions $P$ with support size up to $k$ where the predictive value of the true support is lower bounded by $\gamma$, i.e.
$$
\mathcal{P}_{k, \gamma} := \left\{
P : |s(\mu_P)| \le k, \gamma(m; P) \ge \gamma(m) \ \forall m = 1,...,k
\right \}.
$$
Thus our goal is to lower bound the minimax probability
\begin{align}
\inf_{\hat{f}_n: |s(\hat{f}_n)| \le k}
\sup_{P\in \mathcal{P}_{k, \gamma}}
P\left(
\left|
s(\mu_P) \setminus s(\hat{f}_n)
\right|
\ge m
\right)
\label{eq:minimax}
\end{align}
for $m = 1, ..., \min(k, \lfloor d/2 \rfloor)$.\footnote{We ignore the case where $m > \lfloor d/2 \rfloor$ since this probability is only relevant if $k > \lfloor d/2 \rfloor$. However, the probability must be zero in this case by the pigeonhole principle.}
For this minimax probability to be meaningful, note that we have removed trivial estimation procedures that select an excessive number of predictors.
Instead, \eqref{eq:minimax} is restricted to estimators that select at most $k$ predictors.
The following lower bound is a straightforward application of Le Cam's method (see \citet{Wainwright2019-sp}).
The proof relates the difficulty of the estimation procedure to the distance between probability models, which in turn can be bounded by $\gamma$.
\begin{theorem}
For any positive integer $m \le \min(k, \lfloor d/2 \rfloor)$, we have that
\begin{align}
\inf_{\hat{f}_n: |s(\hat{f}_n)| \le k}
\sup_{P \in \mathcal{P}_{k, \gamma}}
P\left(
\left|
s(\mu_P) \setminus s(\hat{f}_n)
\right|
\ge m
\right)
\ge
\frac{1}{2}
\left(
1 - \sqrt{\frac{n \sigma^2}{2} \gamma(2m)}
\right).
\end{align}
\label{thrm:lower}
\end{theorem}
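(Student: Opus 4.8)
The plan is to use Le Cam's two-point method. I would reduce the minimax probability to distinguishing two distributions $P_0,P_1\in\mathcal{P}_{k,\gamma}$ whose true supports are far apart, and then show that the total variation distance between $P_0^{\otimes n}$ and $P_1^{\otimes n}$ is controlled by $\gamma(2m)$. First comes the combinatorial step: choose supports $s_0,s_1\subseteq\{1,\dots,d\}$ with $|s_j|\le k$ whose symmetric difference is large enough that every $\hat{s}$ with $|\hat{s}|\le k$ satisfies $\max\{|s_0\setminus\hat{s}|,|s_1\setminus\hat{s}|\}\ge m$; concretely one takes $|s_0\setminus s_1|=|s_1\setminus s_0|=2m$, which fits because $m\le\min(k,\lfloor d/2\rfloor)$, and the claim then follows from a one-line count (an estimator with $k$ slots cannot cover more than half of a $4m$-element symmetric difference). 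Consequently the events $\{|s_0\setminus s(\hat{f}_n)|<m\}$ and $\{|s_1\setminus s(\hat{f}_n)|<m\}$ are disjoint for every estimator, and the standard Le Cam reduction gives
\[
\sup_{P\in\mathcal{P}_{k,\gamma}}P\big(|s(\mu_P)\setminus s(\hat{f}_n)|\ge m\big)\ \ge\ \tfrac12\big(1-\|P_0^{\otimes n}-P_1^{\otimes n}\|_{\mathrm{TV}}\big).
\]

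The substantive step is to exhibit $P_0,P_1$ that both lie in $\mathcal{P}_{k,\gamma}$, have the far-apart supports above, and are still hard to tell apart. I would fix a common covariate law $Q$ on $[-1,1]^d$ with independent, mean-zero, unit-variance coordinates, keep the conditional means equal on $s_0\cap s_1$ through a linear term $g$ with large coefficients, and place a single high-order interaction term on the differing coordinates: $\mu_0(x)=g(x_{s_0\cap s_1})+\beta\prod_{j\in s_0\setminus s_1}x_j$ and $\mu_1(x)=g(x_{s_0\cap s_1})+\beta\prod_{j\in s_1\setminus s_0}x_j$. Two features make this work. Because dropping any single coordinate of $s_0\setminus s_1$ destroys the whole product and the product has conditional mean zero given any set of coordinates independent of it, one gets $\gamma(m';P_0)=\beta^2$ for all $m'\le 2m$, and $\gamma(m';P_0)\ge\gamma(m')$ for $2m<m'\le k$ once the coefficients of $g$ are large enough; hence setting $\beta^2=\gamma(2m)$ and using monotonicity of $\gamma$ places $P_0$ --- and, by symmetry, $P_1$ --- in $\mathcal{P}_{k,\gamma}$. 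And because the two product blocks are over disjoint independent coordinates, $\|\mu_0-\mu_1\|_{L^2(Q)}^2=2\beta^2=2\gamma(2m)$.

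To finish, $P_0$ and $P_1$ share the marginal $Q$ and differ only through their Gaussian noise, so $\mathrm{KL}(P_0^{\otimes n}\|P_1^{\otimes n})=\frac{n}{2\sigma^2}\|\mu_0-\mu_1\|_{L^2(Q)}^2$; Pinsker's inequality bounds $\|P_0^{\otimes n}-P_1^{\otimes n}\|_{\mathrm{TV}}$ by $\sqrt{\tfrac{n}{4\sigma^2}\|\mu_0-\mu_1\|_{L^2(Q)}^2}$, which by the previous step is of order $\sqrt{n\gamma(2m)/\sigma^2}$. Substituting into the displayed inequality and collecting constants gives the stated bound.

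The main obstacle is the construction in the second step: one has to reconcile the requirement that $P_0$ and $P_1$ each have predictive value at least $\gamma$ across all of $\{1,\dots,k\}$ --- which wants every active coordinate to be individually influential --- with the requirement that $\mu_0$ and $\mu_1$ be close in $L^2(Q)$ --- which wants relocating the support to be cheap. The interaction-term device is what breaks this tension, since the product is at once \emph{fragile} (any one omitted input kills it, pinning $\gamma(m';P_0)$ to a constant) and \emph{mobile} (replacing its coordinate block by an independent copy perturbs the regression function by a controlled amount). Two lesser points to handle with care are (i) making the support-packing argument airtight when $m$, $k$, and $d$ are comparable, where one must shrink $|s_0|,|s_1|$ below $k$ or otherwise exploit the dimensional headroom, and (ii) verifying that the function class $\mathcal{F}$ used to define $\gamma$ is rich enough --- it should contain the linear piece $g$ and represent or well-approximate the product --- which is where the expressiveness of the sparse-input hierarchical network class is used.
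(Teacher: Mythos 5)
Your proposal follows the same skeleton as the paper's proof: reduce the minimax support-recovery probability to a two-point test via Le Cam (the paper's Lemma~\ref{lemma:minimax} is exactly your packing-plus-reduction step), bound the total variation distance via Pinsker and a Gaussian KL computation, and tie the KL to $\gamma(2m)$. Where you genuinely diverge is that last step: the paper simply asserts, through the identification of squared-error excess risk with scaled negative log-likelihood, that the minimum KL over pairs in $\mathcal{P}_{k,\gamma}$ whose supports differ by $2m$ equals $\gamma(2m)/\sigma^2$, and then takes a supremum over such pairs; you instead construct an explicit pair (a shared linear block on $s_0\cap s_1$ plus a movable $2m$-way interaction with $\beta^2=\gamma(2m)$) and verify both membership in $\mathcal{P}_{k,\gamma}$ for every $m'\le k$ and $\|\mu_0-\mu_1\|_{L^2(Q)}^2=2\gamma(2m)$. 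This buys rigor the paper leaves implicit --- achievability of the extremal pair and the check of all the $\gamma(m')$-constraints --- at the cost of extra bookkeeping (realizability of the interaction within $\mathcal{F}$, which you correctly flag). Two caveats. First, your packing count is off as stated: with only $|s_0|,|s_1|\le k$ and $|s_0\setminus s_1|=|s_1\setminus s_0|=2m$, an estimator could output $s_0\cup s_1$ whenever $|s_0\cup s_1|\le k$ and miss nothing, so you need the supports as large as possible --- take $|s_0\cap s_1|=k-2m$ so that $|s_0|=|s_1|=k$ (your $g$-block makes this easy) --- not ``shrunk below $k$'' as you suggest; this also tacitly requires $2m\le k$ and $k+2m\le d$, a regime restriction the paper's own lemma shares without comment. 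Second, your constants land at $\|P_0^{\otimes n}-P_1^{\otimes n}\|_{TV}\le\sqrt{n\gamma(2m)/(2\sigma^2)}$, which agrees with what the paper's proof actually derives ($\sigma^2$ in the denominator); the factor $\sqrt{n\sigma^2\gamma(2m)/2}$ in the theorem display has $\sigma^2$ misplaced, so ``collecting constants'' should be read as matching the proof's bound rather than the printed statement.
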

\noindent Assuming $\gamma$ is strictly positive, the minimax probability is smaller than $q \in (0, 1/2)$ only if the number of observations $n \gtrsim (1/2 - q)^2/\gamma(2m)$.

From Theorems~\ref{thrm:upper} and \ref{thrm:lower}, we see that the upper and lower bounds on the probability of having $m$ false positives both depend on $\gamma$.
For this probability to be small, the upper bound states that $n \gtrsim 1/\gamma^2(m)$ while the lower bound states that $n \gtrsim 1/\gamma(2m)$.
Admittedly, the bounds differ by a polynomial factor, which we believe could be tightened with additional assumptions about the neural network estimation procedure.
Nevertheless, the results clearly show that our ability to perform support screening  is fundamentally connected to the predictive value of the relevant covariates relative to the irrelevant ones.

Although the proof techniques used in this section are not novel, this connection between support screening and $\gamma$ has important practical implications that, to our knowledge, have not been sufficiently emphasized in the literature.
In particular, we have established that good support recovery is an unrealistic goal if covariates in the true support can be predicted accurately using covariates outside the support.
This issue cannot be ignored since covariates are likely to be correlated in high-dimensional data.
We will address this issue in the following section by constructing an \textit{ensemble} of sparse-input hierarchical networks.

\subsection{Ensemble by averaging sparse-input hierarchical networks}
\label{sec:ensembling}

We now propose ensemble by averaging sparse-input hierarchical networks, which we refer to as EASIER-net.
Ensembling addresses two issues with training a single sparse-input hierarchical network.
First, as shown in Section~\ref{sec:some_theory}, accurate variable screening  is unrealistic when covariates are highly correlated because there is not enough information to distinguish relevant versus irrelevant variables.
Instead, the fitted model should reflect the uncertainty of the true support.
Second, in many high-dimensional datasets, strongly-correlated covariates actually belong to a group, such as gene expression values from the same pathway.
In this case, we would like the variable selection method to select variables from the same group at similar rates.
We outline the ensembling procedure below and engage a Bayesian perspective to help us understand how ensembling addresses these two issues.

In EASIER-net, the ensemble is composed of $B$ independently-trained sparse-input hierarchical networks, denoted $\{\hat{f}_{(1)}, ..., \hat{f}_{(B)} \}$.
We inject noise into the training procedure to reduce correlation across the fitted models and, thereby, the variance of the ensemble \citep{Breiman1996-bf}.
We follow nearly the same recipe as in \citet{Lakshminarayanan2017-wn}: for each network, we randomly initialize the parameters and minimize \eqref{eq:nn_obj} by running {\tt Adam} with shuffled minibatches and then batch proximal gradient descent.
To predict the response for a given input $x$, we average the predictions across networks, i.e. $\hat{f}_{\text{ensemble}} = \frac{1}{B} \sum_{b=1}^B \hat{f}_{(b)}(x)$.
Note that in classification problems, this involves averaging the categorical distributions.
Also, unlike some ensembling procedures, e.g. random forests, we \textit{do not} bootstrap or subsample our data.
In empirical experiments, we found that random network initializations and random mini-batch ordering without data re/sub-sampling achieved stronger performance.

Ensembling can be viewed as an approximation of Bayesian model averaging (BMA), a connection that has been noted in previous works \citep{Pearce2020-ee, Wilson2020-jw}.
BMA places a prior over all possible model classes and averages over the posterior distributions to make a prediction \citep{Madigan1994-ni}.
Ensembling uses a similar approach: since stochastic gradient descent approximates the Bayesian posterior using a single mode \citep{Mandt2017-xk}, the aforementioned ensembling procedure randomly samples maximum a posteriori (MAP) estimates and averages their predictions.
By sampling multiple modes, ensembling often produces better approximations of the posterior distribution than procedures that sample from a single mode \citep{Wilson2020-jw, Pearce2020-ee}.
Since BMA is computationally intractable in many settings, we can instead use ensembling as a computationally efficient approximation.

We use this Bayesian perspective to understand why ensembling can help in settings where covariates are highly correlated.
In variable selection problems, BMA quantifies the uncertainty of the true support via the posterior distribution \citep{Raftery1997-pr, Hoeting1999-pg, Viallefont2001-wa}.
Since EASIER-net is an approximate BMA procedure, the probability that a member in EASIER-net selects support $\tilde{s}$ is approximately equal to the posterior probability that the true support is $\tilde{s}$, i.e. for any $b = 1,..,B$, we have
\begin{align}
\Pr \left (s (\hat{f}_{(b)}) = \tilde{s} ; \{(x_i,y_i): i = 1,\ldots ,n\} \right)
\approx P\left (s(f) = \tilde{s} \mid \{(x_i,y_i): i = 1,\ldots ,n\} \right ).
\label{eq:post_prob}
\end{align}
Thus, the variety of the fitted supports in EASIER-net reflects the uncertainty of the variable selection procedure.
Moreover, \eqref{eq:post_prob} implies that the selection rate of variable $i$ in EASIER-net is approximately the posterior probability of variable $i$ belonging in the true support.
So if all variables in a group are strongly correlated, their selection rates should be similar.

\subsection{Hyperparameter tuning}
We tune the two penalty parameters in EASIER-net using K-fold cross-validation and fix the remaining hyperparameters to their default values (see the Section~\ref{sec:hyperparam} in the Appendix for reasonable defaults).
We recommend performing either a grid search or a random search \citep{Bergstra2012-ct} over candidate penalty parameter values since these procedures are easily parallelizable.
For each candidate value, we apply EASIER-net to each of the $K$ folds and obtain the average validation loss.
We then apply EASIER-net on all of the data using the penalty parameters that minimize the average validation loss.

\section{Empirical analyses}
\label{sec:empirical}

For all analyses, we initialize SIER-net with 5 hidden layers and 100 hidden nodes per layer.
In all cases, EASIER-net was trained with $B = 20$ members, unless specified otherwise.
To handle class imbalance issues in (multi-)classification problems, we weight the observation's loss by the inverse of the empirical frequency of its class.
All continuous covariates and outcomes were centered and scaled to have mean zero and variance one.
Penalty parameters were tuned via 4-fold cross-validation unless specified otherwise.
The mini-batch size for Adam was one-third of the training dataset size.

\subsection{Simulation study: Deconstructing EASIER-net}
\label{sec:empirical_deconstrct}

\begin{table}
\centering
{\small
\begin{tabular}{lrHrrr}
Model & Test loss & val loss & \# layers & Avg \# hidden nodes & \# support \\
\toprule
\multicolumn{5}{c}{\textit{20 relevant, 600 observations}}\\
DropoutNet-Single &   0.497423 &  0.384651 &        5 &            100.0 &         100.0 \\
DropoutNet-Ensemble &   0.355795 &  0.299026 &        3.0 &            100.0 &         100.0 \\
SparseNet-Single &   0.241641 &  0.189753 &        1 &            100.0 &          39.0 \\
SparseNet-Ensemble &   0.226045 &  0.156973 &        3.0 &            100.0 &         100.0 \\
SIER-net &   0.223311 &  0.189781 &        3 &            100.0 &          17.0 \\
EASIER-net &   0.226281 &  0.173028 &        3.0 &            100.0 &          25.6 \\
\midrule
\multicolumn{5}{c}{\textit{20 relevant, 3000 observations}}\\
   DropoutNet-Single &   0.473178 &  0.431147 &        5 &            100.0 &         100.0 \\
DropoutNet-Ensemble &   0.278896 &  0.258332 &        5.0 &            100.0 &         100.0 \\
SparseNet-Single &   0.076208 &  0.068500 &        1 &             65.0 &          25.0 \\
SparseNet-Ensemble &   0.060577 &  0.054455 &        1.0 &            100.0 &          23.6 \\
SIER-net &   0.087473 &  0.078205 &        5 &            100.0 &          43.0 \\
EASIER-net &   0.069780 &  0.060710 &        5.0 &            100.0 &          43.2 \\
\midrule
\multicolumn{5}{c}{\textit{100 relevant, 600 observations}}\\
   DropoutNet-Single &   0.511054 &  0.470498 &        3 &            100.0 &         100.0 \\
DropoutNet-Ensemble &   0.404107 &  0.375981 &        3.0 &            100.0 &         100.0 \\
SparseNet-Single &   0.382923 &  0.362970 &        1 &             71.0 &          58.0 \\
SparseNet-Ensemble &   0.399566 &  0.338478 &        3.0 &            100.0 &         100.0 \\
SIER-net &   0.382481 &  0.390588 &        3 &            100.0 &          64.0 \\
EASIER-net &   0.385061 &  0.345800 &        5.0 &            100.0 &          92.0 \\
\midrule
\multicolumn{5}{c}{\textit{100 relevant, 3000 observations}}\\
   DropoutNet-Single &   0.513196 &  0.481725 &        3 &            100.0 &         100.0 \\
DropoutNet-Ensemble &   0.304877 &  0.275531 &        3.0 &            100.0 &         100.0 \\
SparseNet-Single &   0.271722 &  0.267776 &        1 &             16.5 &         100.0 \\
SparseNet-Ensemble &   0.269778 &  0.267736 &        1.0 &             17.7 &         100.0 \\
SIER-net &   0.264340 &  0.264534 &        2 &             32.0 &         100.0 \\
EASIER-net &   0.256367 &  0.262547 &        2.0 &             86.4 &         100.0 \\
\end{tabular}
}
\caption{
We deconstruct EASIER-net to understand how each of the three modifications --- sparsity, ensembling, and skip-connections --- affect the test loss and the selected network structure.
The baseline method \texttt{DropoutNet} only regularizes the model using dropout and requires tuning one hyperparameter.
Next, the \texttt{SparseNet} method adds lasso penalties and increases the number of hyperparameters to three.
Then, \texttt{SIER-net} adds skip-connections and reduces the number of hyperparameters to two; its ensemble version is \texttt{EASIER-net}.
For \texttt{DropoutNet} and \texttt{SparseNet}, we denote the two single and ensemble versions using \texttt{-Single} and \texttt{-Ensemble}.
}
\label{table:simulation_deconstruct}
\end{table}

The purpose of this simulation is to study the contribution from each of the proposed modifications in EASIER-net.
We do this by deconstructing EASIER-net, starting from a dense neural network and adding modifications progressively.

We consider three model-fitting procedures and ensemble versions of each one.
The baseline model (\texttt{DropoutNet}) is a typical dense neural network that is regularized using a dropout rate of 15\%; its ensemble version is similar to that in \citet{Olson2018-vs}.
For \texttt{DropoutNet} and its ensemble version, we only tune the number of hidden layers.
Next, \texttt{SparseNet} adds an input filter layer and regularizes the network weights using two $L_1$ penalties, one to encourage sparsity in the inputs and another for the network weights.
For \texttt{SparseNet} and its ensemble version, we tune the number of hidden layers and two $L_1$ penalty parameters.
Finally, \texttt{SIER-net} adds skip-connections to \texttt{SparseNet}, keeps the initial number of hidden layers fixed, and reduces the number of hyperparameters to two.
All networks were initialized with 100 hidden nodes per layer.

We consider four different simulation setups to reflect a variety of probable settings.
Covariates $(X_1,...,X_{100})$ are sampled independently from $\text{Unif}(0,1)$.
Response $Y$ is defined as
$$
Y = \sum_{i=0}^{m} \left( \sin(2x_{4i + 1} + 2x_{4i + 2}) + 5 x_{4i + 3} \left |x_{4i + 4} - 0.25\right | \right) + \epsilon
$$
where $\epsilon$ is a Gaussian random variable with variance chosen so that the signal to noise ratio is 2.
To vary the sparsity of the true data-generating mechanism, we consider $m = 4$ and $19$, which correspond to 20 and 100 relevant covariates, respectively.
We also vary the amount of training data by considering 600 versus 3000 observations.
We tune the hyperparameters using a single training validation split, where the number of validation samples is one-fourth of that for training.

As shown in Table~\ref{table:simulation_deconstruct}, sparsity and ensembling improve the test loss in all four simulation settings.
Sparsity offers the most improvement when the true model truly depends on a small subset of the covariates, but is still helpful when the true model depends on all the covariates.
Ensembling offers the most improvement to the dense neural network because sparse models tend to be smaller and have more similar structures.
The improvement in prediction accuracy by ensembling sparse-input hierarchical networks appears to depend on the data.
Although ensembling seemed to improve accuracy only slightly in these simulations, we find that it provides significant gains on certain datasets in Section~\ref{sec:uci}.

Ensemble sparse neural networks and EASIER-net have very similar predictive accuracy in all simulation settings, which is exactly as desired.
Recall that the former requires tuning three hyperparameters whereas the latter requires tuning only two.
Thus, the skip-connections in EASIER-net help reduce the amount of computation time spent on network structure learning without sacrificing predictive accuracy.

Finally, we give an example to illustrate how varying penalty parameter $\lambda_2$ modulates the dependence on each of the network layers.
Figure~\ref{fig:continuous_connection_factors} plots the proportion of variance contributed by each layer, as defined in \refeq{eq:var_contrib}.
For large values of $\lambda_2$, the proportion of variance contributed by the scaled inputs (layer $l = 1$) is one, which means the neural network is simply a linear model.
As $\lambda_2$ decreases, the contribution from scaled inputs decreases, and the contribution by the hidden layers increase in the order of their depth; the first and second hidden layers begin contributing earliest, then the third and fourth, and finally the fifth.
As expected, the fifth hidden layer contributes the most as $\lambda_2$ approaches zero, since the last hidden layer has the highest model capacity.

\begin{figure}
	\centering
	\includegraphics[width=0.35\linewidth]{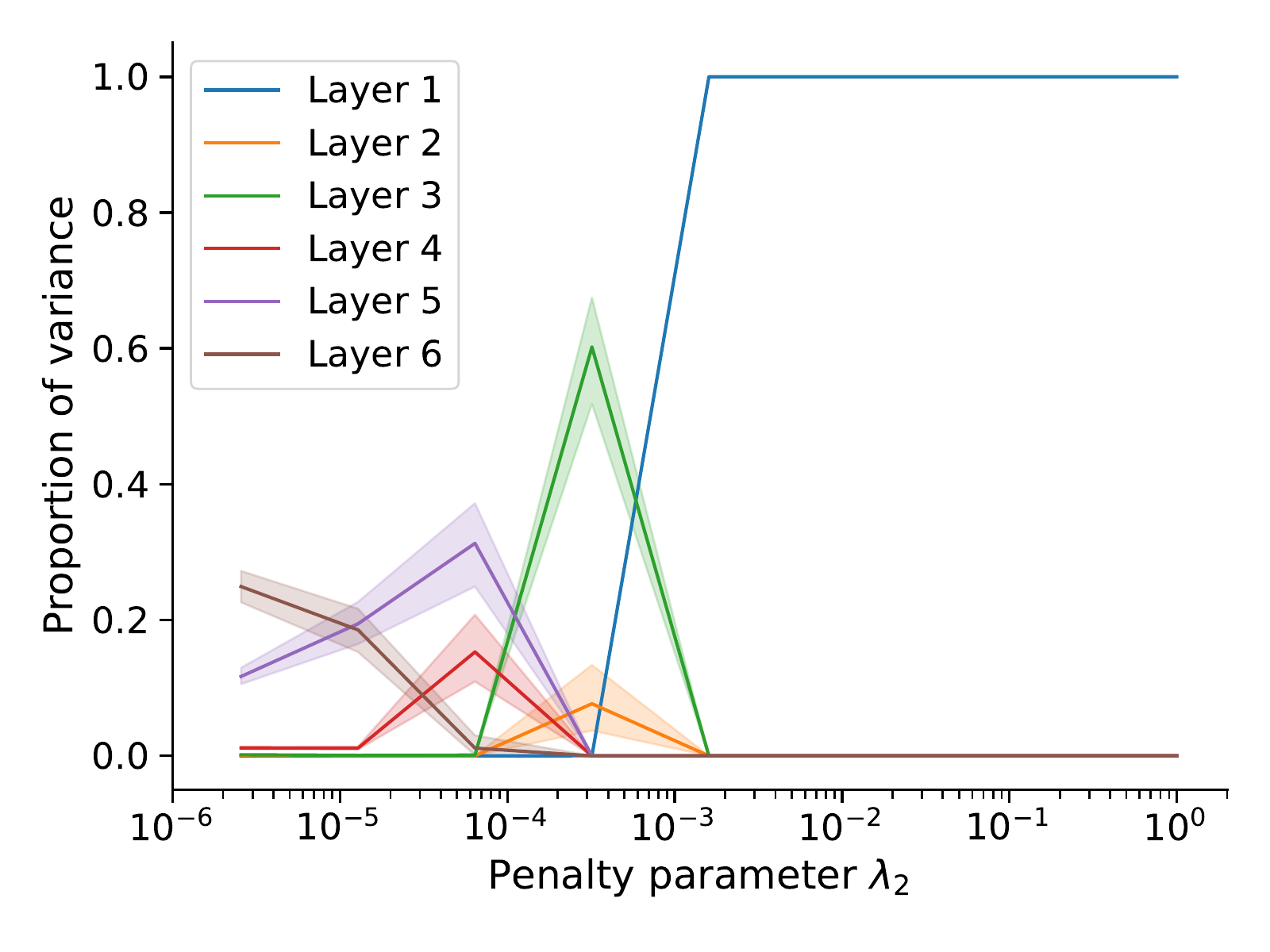}
	\caption{
		Illustration of how varying the penalty parameter $\lambda_2$ controls the proportion of variance contributed by each layer to the final prediction, which thereby controls the size of the fitted network.
		When $\lambda_2 = 1$, the fitted network only depends on the first layer, which means it is linear in the inputs.
		As $\lambda_2$ decreases, the higher levels contribute more to the final prediction and the lower levels contribute less.
	}
\label{fig:continuous_connection_factors}
\end{figure}

\subsection{Simulation study: Variable selection}
\label{sec:empirical_support}

In this section, we study the two claims in this paper regarding variable selection by EASIER-net: i) the number of false negatives should be small if the truly relevant covariates contain predictive information that the other covariates do not and ii) highly correlated variables should be selected with similar proportions across the ensemble.

We run a simulation study where we vary the correlation between the covariates.
For different values of $\rho \in [0,1]$, we generate eight covariates $\boldsymbol{X}$ by sampling $\tilde{X}_i$ from $\text{Unif}(0,1)$ for $i = 1,...,8$ and defining $X_{i}  = \tilde{X}_i$ and $\quad X_{i + 4} = \rho \tilde{X}_i + ( 1 - \rho) \tilde{X}_{i + 4}$ for $i = 1,...,4$.
The response only depends on the first four through the equation $Y = X_1 * X_2 + \sin(X_3 + X_4) + \epsilon$, where $\epsilon$ is a Gaussian random variable with variance such that the signal to noise ratio is 2.
We generated a total of 500 observations.
To get accurate estimates of the variable selection rates, we fit EASIER-net with $B = 100$ members rather than 20.

In this simulation, EASIER-net was highly effective at support screening for $\rho$ from 0 to 0.8 (Table~\ref{eq:support_prob}).
In fact, we had perfect support recovery in most cases, which is a more difficult task than support screening.
For $\rho = 0.8$, only one member in the ensemble omitted the first covariate and selected the fifth one instead.
In addition, these results show that most member networks will select the same support in settings where support screening is effective using a single network.
This means that ensembling does not make sparse-input hierarchical networks less interpretable when support screening is feasible.

The simulation results also show a clear grouping effect from ensembling.
As $\rho$ increases, the selection probabilities of the correlated covariates $X_i$ and $X_{i + 4}$ get closer, showing evidence of the grouping effect.
In the extreme case where covariates $(X_1,...,X_4)$ are identical to $(X_5,...,X_8)$ (i.e. $\rho = 1$), the selection probabilities for $X_i$ and $X_{i + 4}$ are similar for all $i = 1,...,4$.
This is unsurprising since no procedure can differentiate which covariates are truly relevant in this setting.

\begin{table}
\centering
{\footnotesize
\begin{tabular}{c|cccc|cccc}
	\toprule
	{} & \multicolumn{8}{c}{Covariate index} \\
	$\rho$ &               1 &     2 &     3 &     4 &     5 &     6 &     7 & 8\\
	\midrule
	0.00 &        1.000 & 1.000 & 1.000 & 1.000 & 0.000 & 0.000 & 0.000 & 0.000 \\
	0.50 &        1.000 & 1.000 & 1.000 & 1.000 & 0.000 & 0.000 & 0.000 & 0.000 \\
	0.80 &        0.980 & 1.000 & 1.000 & 1.000 & 0.020 & 0.000 & 0.000 & 0.000 \\
	0.90 &        0.860 & 0.890 & 0.870 & 0.990 & 0.160 & 0.120 & 0.280 & 0.010 \\
	0.95 &        0.630 & 0.730 & 0.540 & 0.940 & 0.380 & 0.300 & 0.570 & 0.270 \\
	1.00 &        0.550 & 0.480 & 0.470 & 0.570 & 0.460 & 0.540 & 0.600 & 0.640 \\
	\bottomrule
\end{tabular}
}
\caption{
The proportion of networks in EASIER-net that select each of the covariates.
Only the first four covariates are in the true support.
We simulate covariates such that the correlation between $X_i$ and $X_{i + 4}$ is $\rho$ for $i = 1,...,4$.
For small $\rho$, the model recovers the true support; for big $\rho$, the variable selection rates reflect the uncertainty of the true support.
}
\label{eq:support_prob}
\end{table}

\subsection{Evaluation on UCI datasets}
\label{sec:uci}
We study the performance of EASIER-net on five classification problems and six regression problems from the UCI repository \citep{Dua:2019}, whose details are shown in Table~\ref{table:dataset_descrips} of the Appendix.
To represent a variety of problem settings, the selected datasets vary in sample size, number of features, and number of classes.
We compare against three other classifiers: logistic or linear regression with the lasso, random forests, and gradient boosted trees.
Based on \citet{Olson2018-vs}, we also fit an ensemble of dense neural networks with dropout with ten hidden layers and 100 hidden nodes per layer.
We used the negative log likelihood and squared error as the loss functions for the classification and regression tasks, respectively.

EASIER-net attains the smallest test loss across the regression and classification datasets on average.
In particular, they achieve the best performance on two of the five classification problems and two of the six regression problems (Figure~\ref{fig:uci_res}).
We found that EASIER-net typically has similar or better performance than SIER-net, presumably because they are able to quantify model uncertainty.
Among the remaining datasets, SIER-net, Lasso with linear or logistic regression, random forests, and XGBoost attained top performance in at least one dataset.
On the other hand, deep neural networks with dropout never achieved top performance on any dataset, because they can almost always benefit from network structure pruning.

The computation time for EASIER-net was on the same order as the other methods.
For most of the datasets, the time to fit a single sparse-input hierarchical network took only a few minutes; for the largest dataset CT slices with 53500 observations, this took around forty minutes.
By parallelizing both the ensembling and cross-validation procedures, the total time for running EASIER-net should take around double the time to fit a single sparse-input hierarchical network.
Thus, given the prediction accuracy and computation time of EASIER-net, we conclude that they are a useful addition to the data analyst's toolbox.

Across the eleven datasets, EASIER-net selected a variety of network structures via cross-validation (Table~\ref{table:model_structs_uci}).
In three cases, the selected model was linear in the input.
In the remaining datasets, the network used at least one hidden layer, and the layer that contributed the most variance varied.
We also show the variable selection rates for each dataset in Figure~\ref{fig:uci_support}.
The agreement across member networks in the ensemble depends on the dataset.
For instance, the agreement was high on the soybean dataset and was much lower on the arrhythmia, semieon, and Iran house datasets.
This is consistent with our analysis: the soybean dataset contains a small number of high-level features that likely measure different pieces of information whereas many features in the arrhythmia dataset were extracted from the same channel \citep{Guvenir1997-uk}.

%

\begin{figure}
\centering
\hspace{-1.2in}
\begin{subfigure}{0.4\linewidth}
\includegraphics[width=\textwidth]{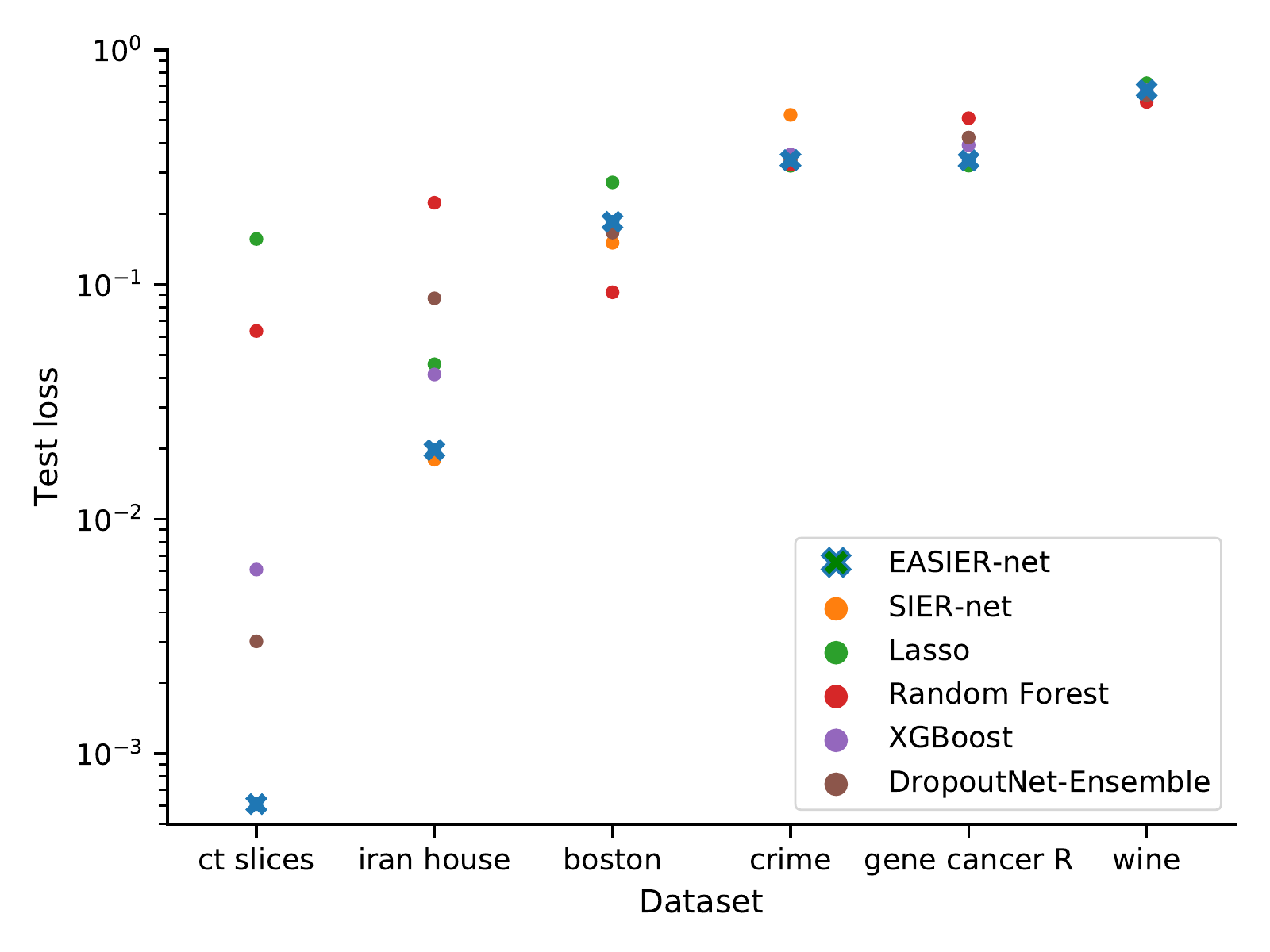}
\end{subfigure}
\hspace{-0.2in}
\begin{subfigure}{0.4\linewidth}
\includegraphics[width=\textwidth]{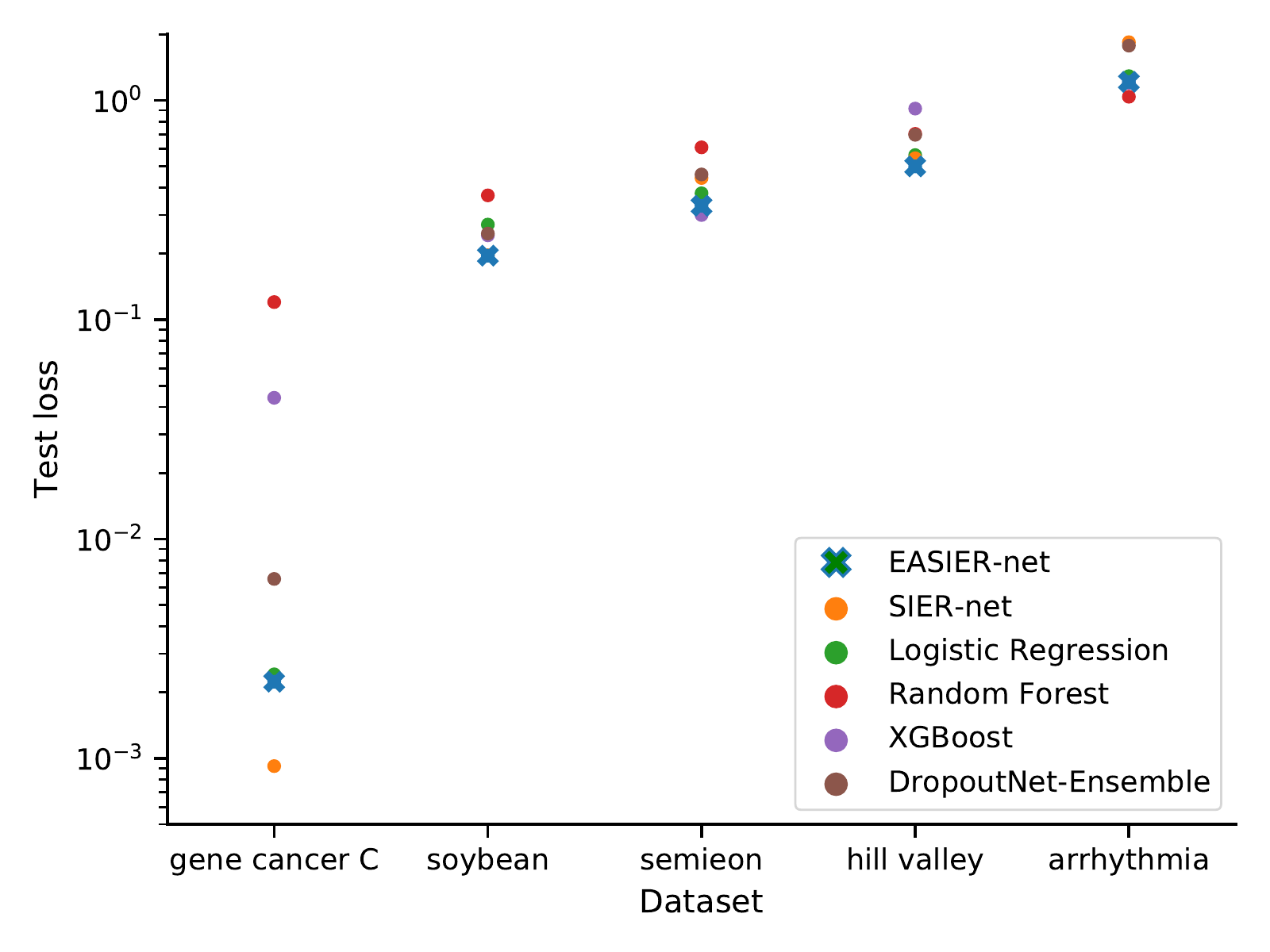}
\end{subfigure}
\begin{subfigure}{0.2\linewidth}
{\footnotesize
\begin{tabular}{p{0.78in}cc}
&\multicolumn{2}{c}{Average test loss} \\
Method & Regression  & Classification \\
\toprule
EASIER-net          & \textbf{0.259}   &   \textbf{0.448}\\
SIER-net & 0.286 &  0.605 \\
Lasso     & 0.306   &  0.500\\
RF  & 0.302 &  0.568\\
XGBoost          & 0.273&    0.541\\
DropoutNet              & 0.276 &  0.637
\end{tabular}
}
\end{subfigure}
\caption{
	Comparison of test loss of different estimation methods across regression (left) and classification (right) datasets.
	The average test loss for each method is shown in the table.
}
\label{fig:uci_res}
\end{figure}

\begin{table}
	\centering
	{\footnotesize
	\begin{tabular}{lHHp{1.9cm}|rrrrrr}
		\toprule
		&&&& \multicolumn{6}{c}{Proportion of variance contributed by layer} \\
		Dataset &   layers &  Support size &  Avg \# hidden nodes &  1 &  2 &  3 &  4 &  5 & 6 \\
		\midrule
		\multicolumn{10}{c}{\textit{Classification tasks}}\\
  gene cancer C &  0.00 (0.00) &  20013.70 (5.32) &     0.0 (0.0) &  \textbf{1.00} (0.00) &  0.00 (0.00) &  0.00 (0.00) &  0.00 (0.00) &  0.00 (0.00) &  0.00 (0.00) \\
            soybean &  0.00 (0.00) &     22.35 (0.17) &     0.0 (0.0) &  \textbf{1.00} (0.00) &  0.00 (0.00) &  0.00 (0.00) &  0.00 (0.00) &  0.00 (0.00) &  0.00 (0.00) \\
            semieon &  5.00 (0.00) &     86.50 (1.68) &   100.0 (0.0) &  0.00 (0.00) &  0.00 (0.00) &  0.00 (0.00) &  0.00 (0.00) &  0.00 (0.00) &  \textbf{0.99} (0.00) \\
        hill valley &  5.00 (0.00) &    100.00 (0.00) &   100.0 (0.0) &  0.00 (0.00) &  0.00 (0.00) &  0.01 (0.00) &  0.01 (0.00) &  0.02 (0.01) &  \textbf{0.61} (0.03) \\
         arrhythmia &  2.80 (0.17) &     34.45 (0.60) &    60.0 (6.1) &  0.00 (0.00) &  0.00 (0.00) &  \textbf{0.61} (0.10) &  0.32 (0.09) &  0.07 (0.03) &  0.00 (0.00) \\
		\midrule
		\multicolumn{10}{c}{\textit{Regression tasks}}\\
        crime &  2.05 (0.05) &    8.05 (0.11) &     6.5 (0.6) &  0.00 (0.00) &  0.00 (0.00) &  \textbf{0.95} (0.05) &  0.05 (0.05) &  0.00 (0.00) &  0.00 (0.00) \\
    CT slices &  5.00 (0.00) &  384.00 (0.00) &   100.0 (0.0) &  0.02 (0.00) &  0.00 (0.00) &  0.00 (0.00) &  0.00 (0.00) &  0.03 (0.01) &  \textbf{0.46} (0.02) \\
       boston &  5.00 (0.00) &   11.60 (0.18) &   100.0 (0.0) &  0.00 (0.00) &  0.00 (0.00) &  0.00 (0.00) &  0.00 (0.00) &  0.10 (0.02) &  \textbf{0.53} (0.04) \\
   Iran house &  5.00 (0.00) &   12.25 (0.29) &   100.0 (0.0) &  0.00 (0.00) &  0.00 (0.00) &  0.00 (0.00) &  0.01 (0.00) &  0.08 (0.00) &  \textbf{0.32} (0.01) \\
         wine &  1.30 (0.11) &   11.00 (0.00) &     9.9 (4.9) &  0.01 (0.01) &  \textbf{0.71} (0.11) &  0.30 (0.11) &  0.00 (0.00) &  0.00 (0.00) &  0.00 (0.00) \\
  gene cancer &  0.00 (0.00) &   15.90 (0.26) &     0.0 (0.0) &  \textbf{1.00} (0.00) &  0.00 (0.00) &  0.00 (0.00) &  0.00 (0.00) &  0.00 (0.00) &  0.00 (0.00) \\
		\bottomrule
	\end{tabular}
	}
\caption{
Summaries of the structure of member networks from EASIER-net: the average number of hidden nodes per active layer and the proportion of variance contributed by each layer.
Standard errors are shown in parentheses.
We highlight the layer that contributes the most to the network prediction.
}
	\label{table:model_structs_uci}
\end{table}

\begin{figure}
\begin{subfigure}{0.49\textwidth}
\includegraphics[width=0.32\linewidth]{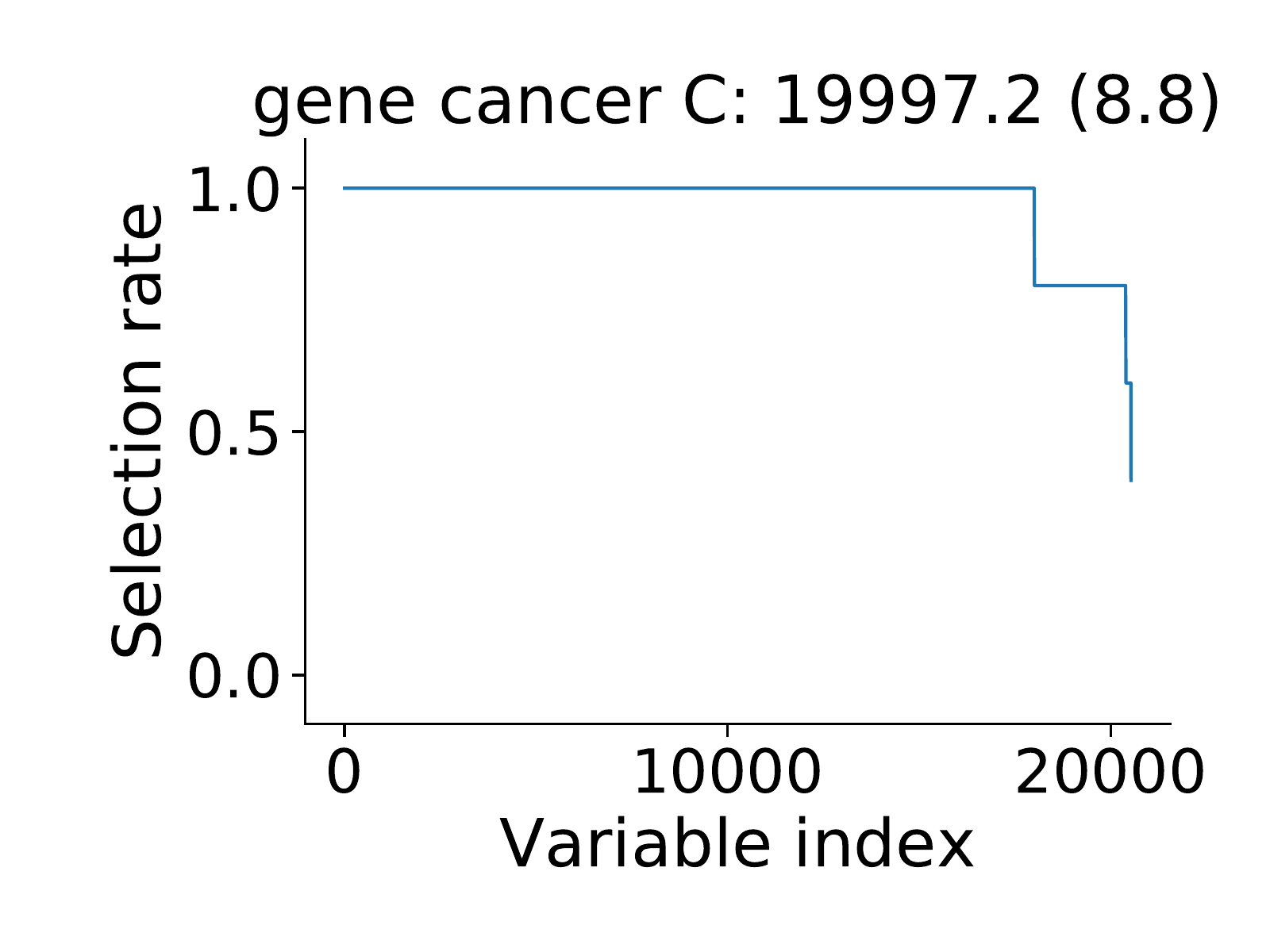}
\includegraphics[width=0.32\linewidth]{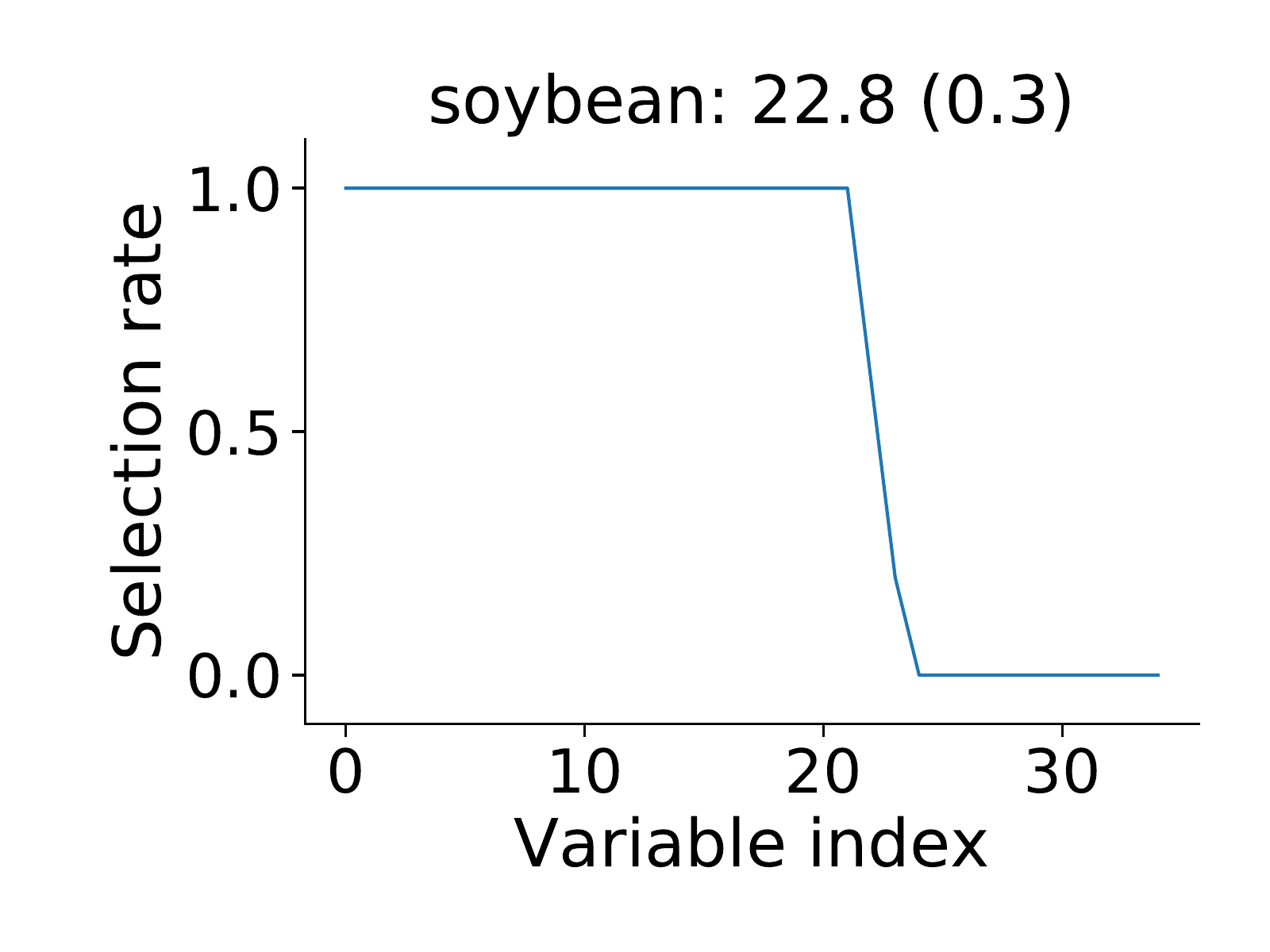}
\includegraphics[width=0.32\linewidth]{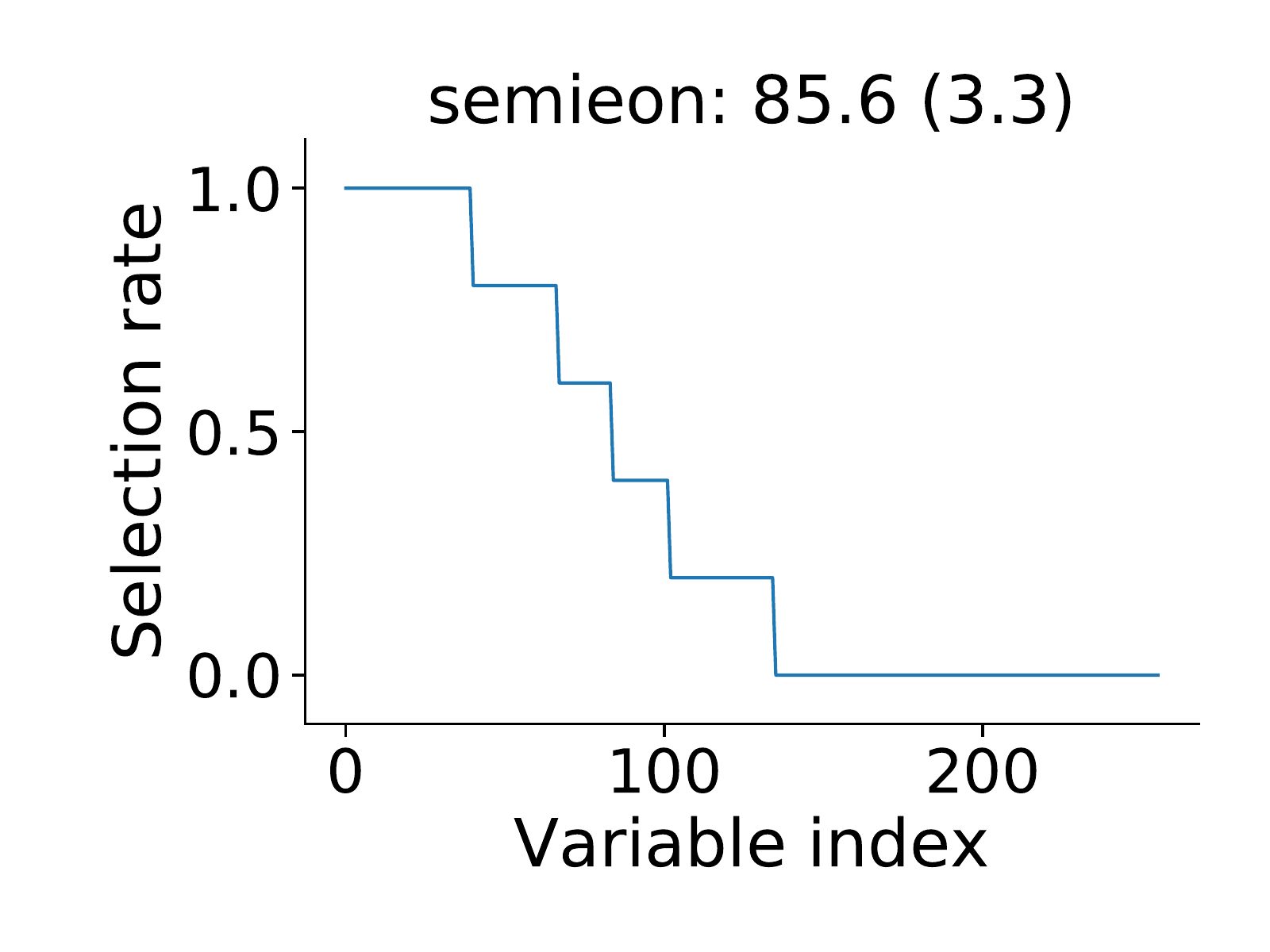}
\includegraphics[width=0.32\linewidth]{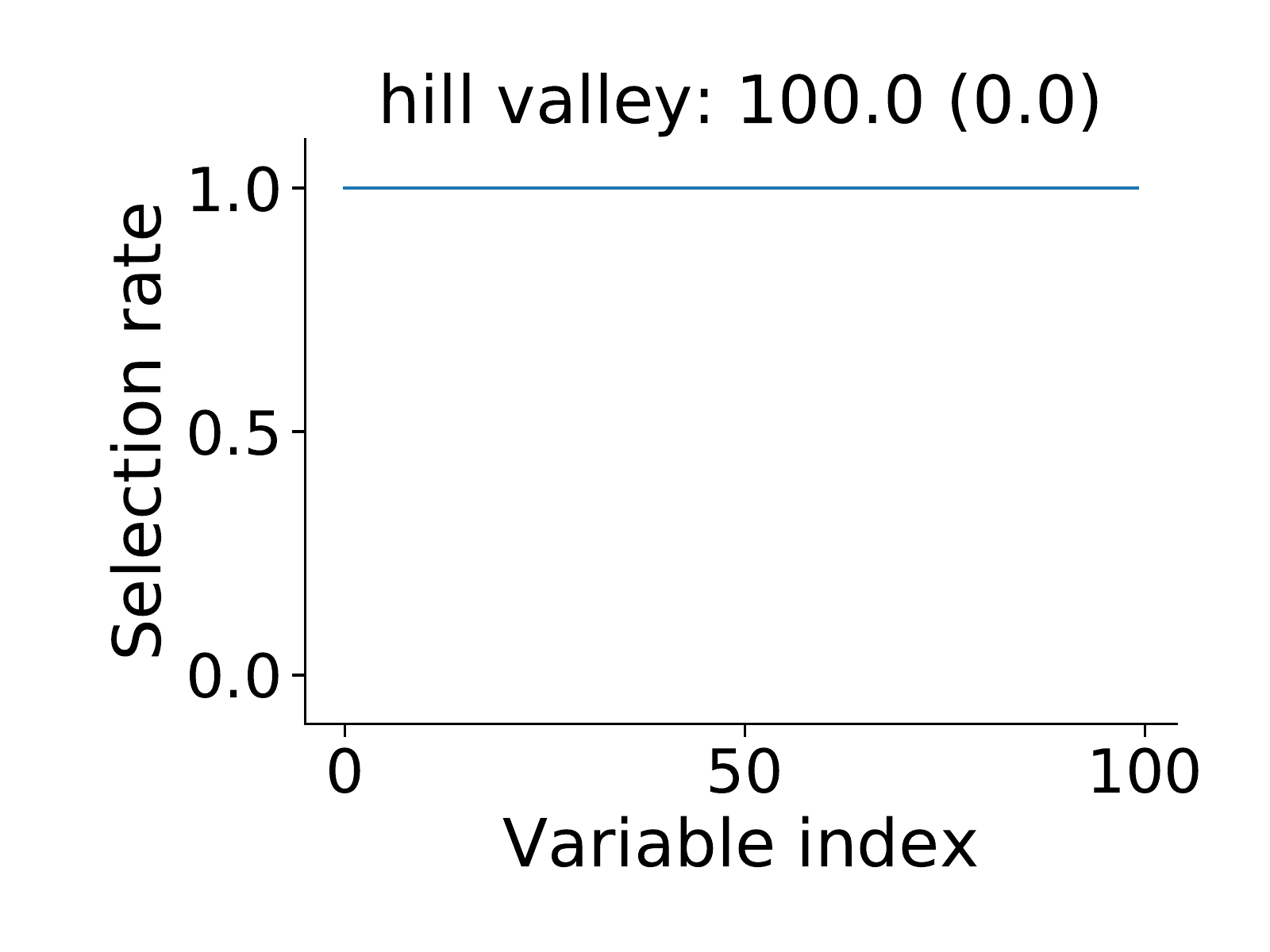}
\includegraphics[width=0.32\linewidth]{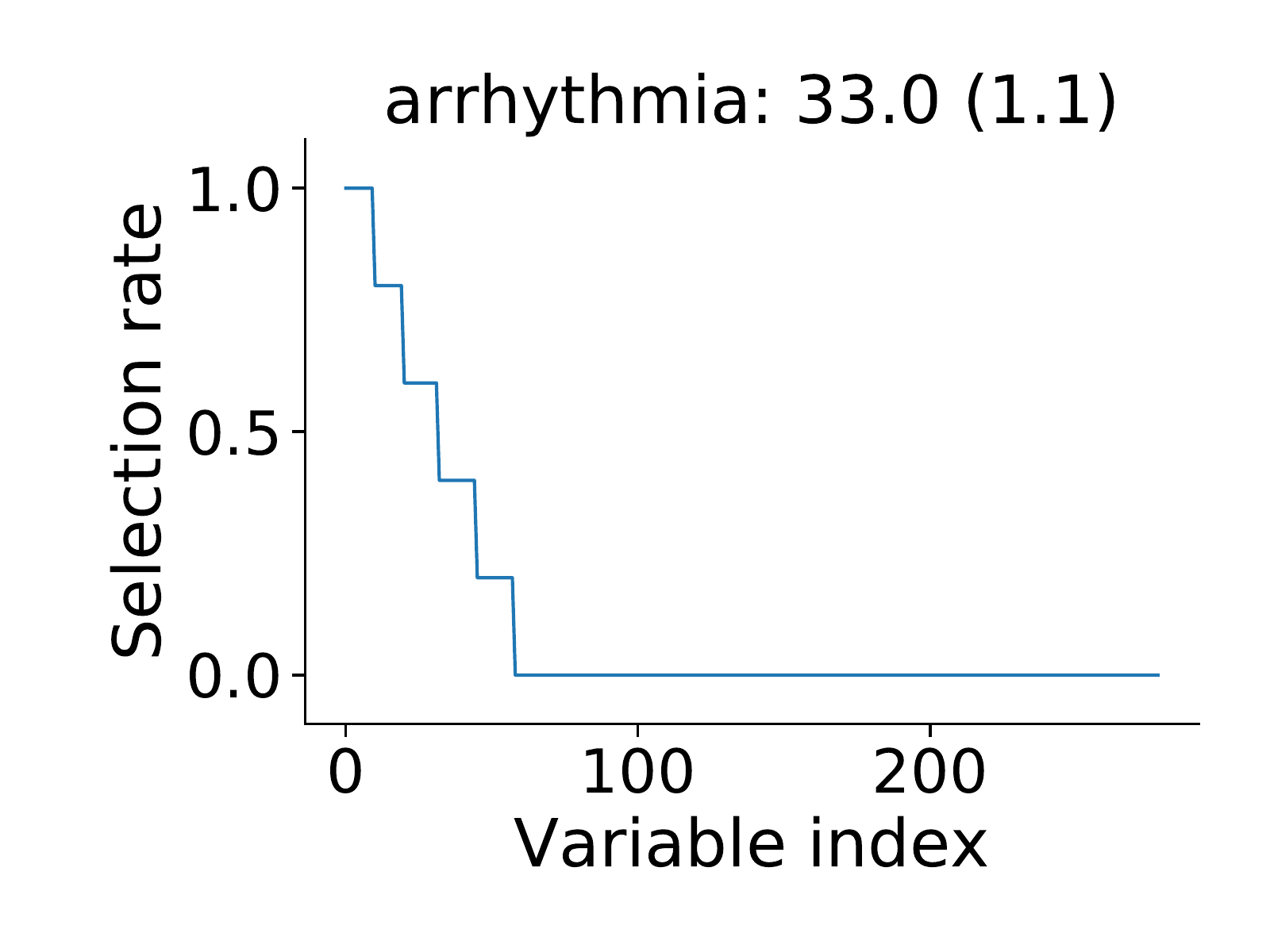}
\caption{Classification}
\label{fig:class_support}
\end{subfigure}
\hspace{0.001\textwidth}
\begin{subfigure}{0.49\textwidth}
\includegraphics[width=0.32\linewidth]{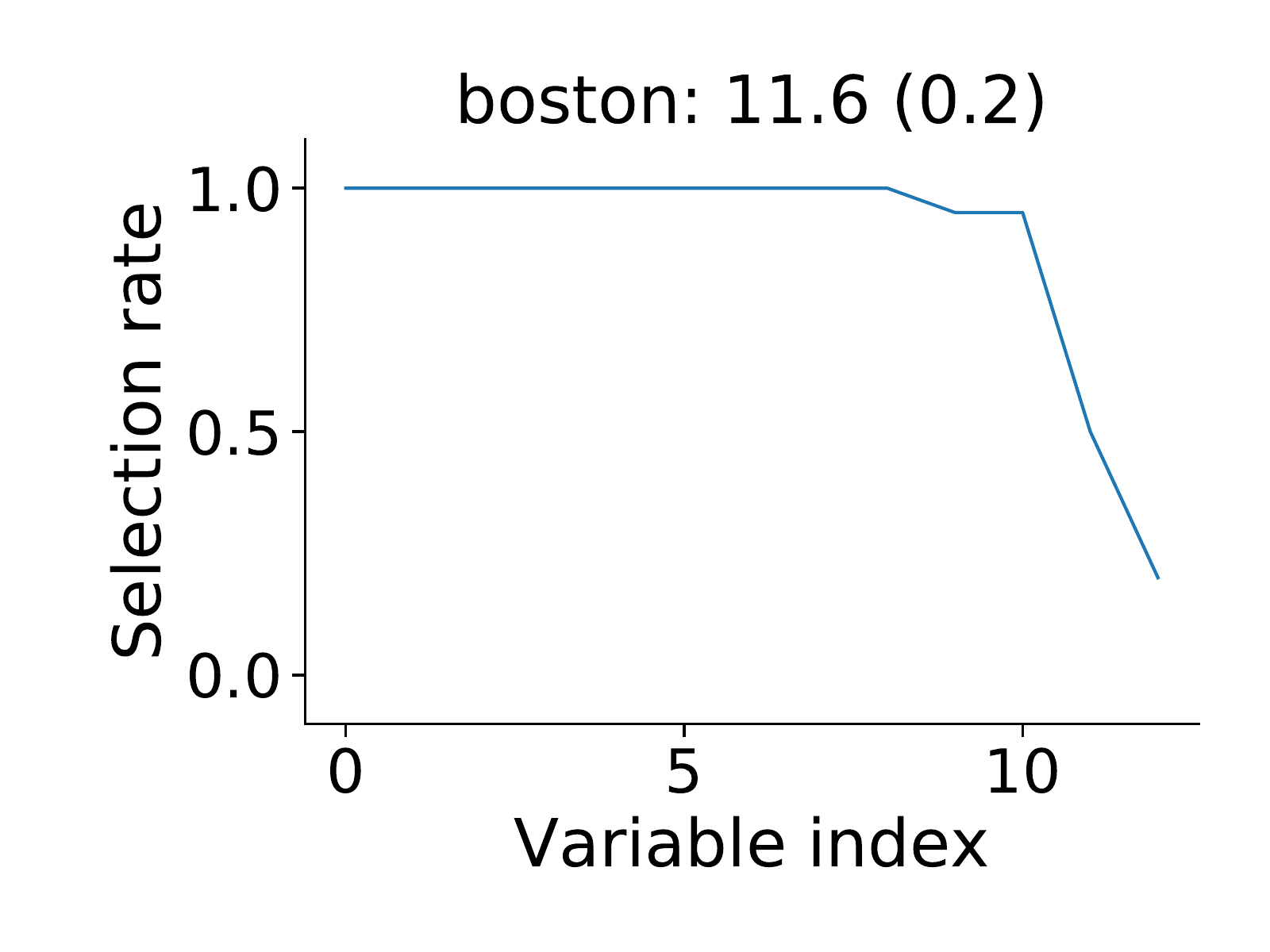}
\includegraphics[width=0.32\linewidth]{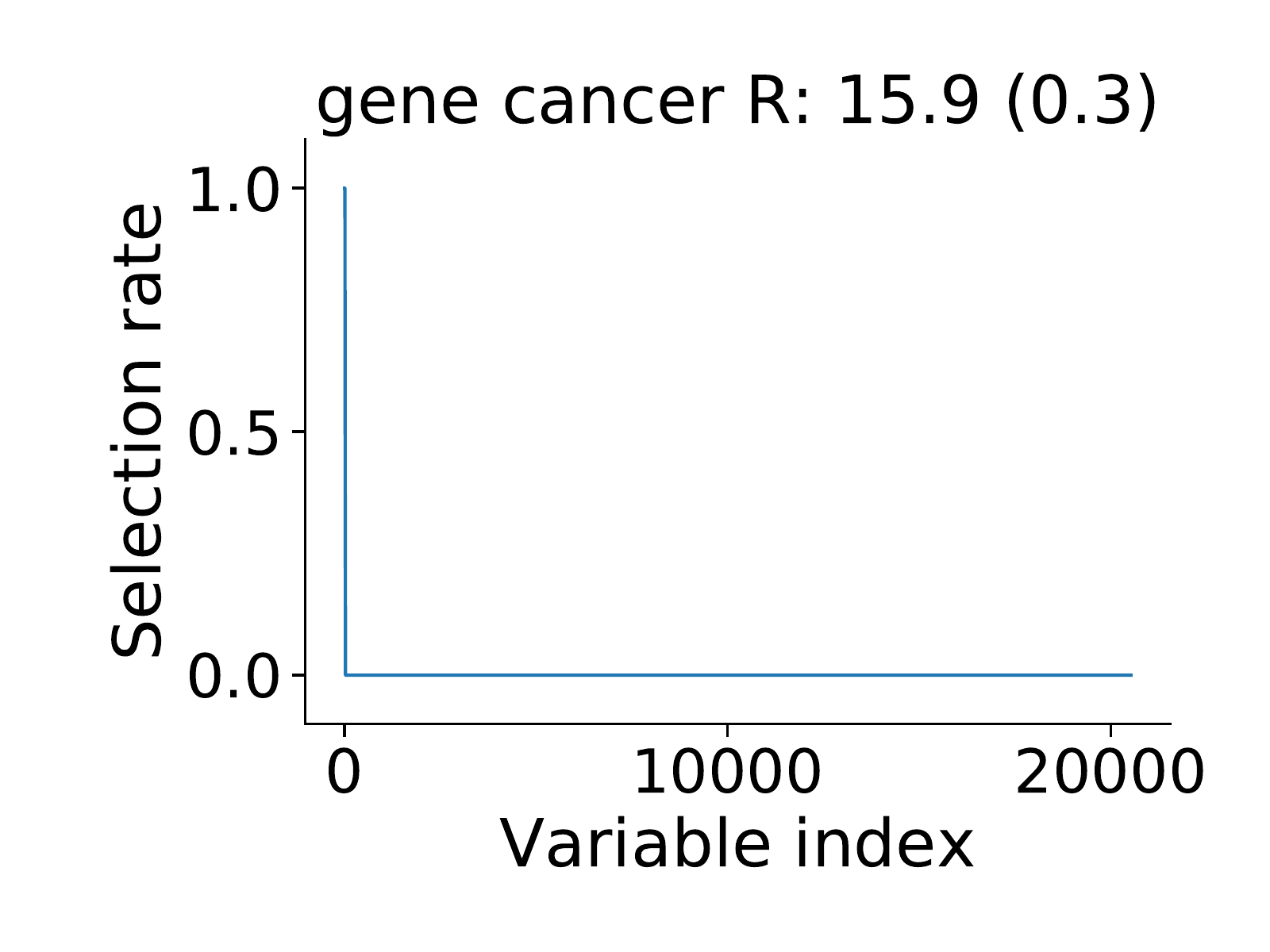}
\includegraphics[width=0.32\linewidth]{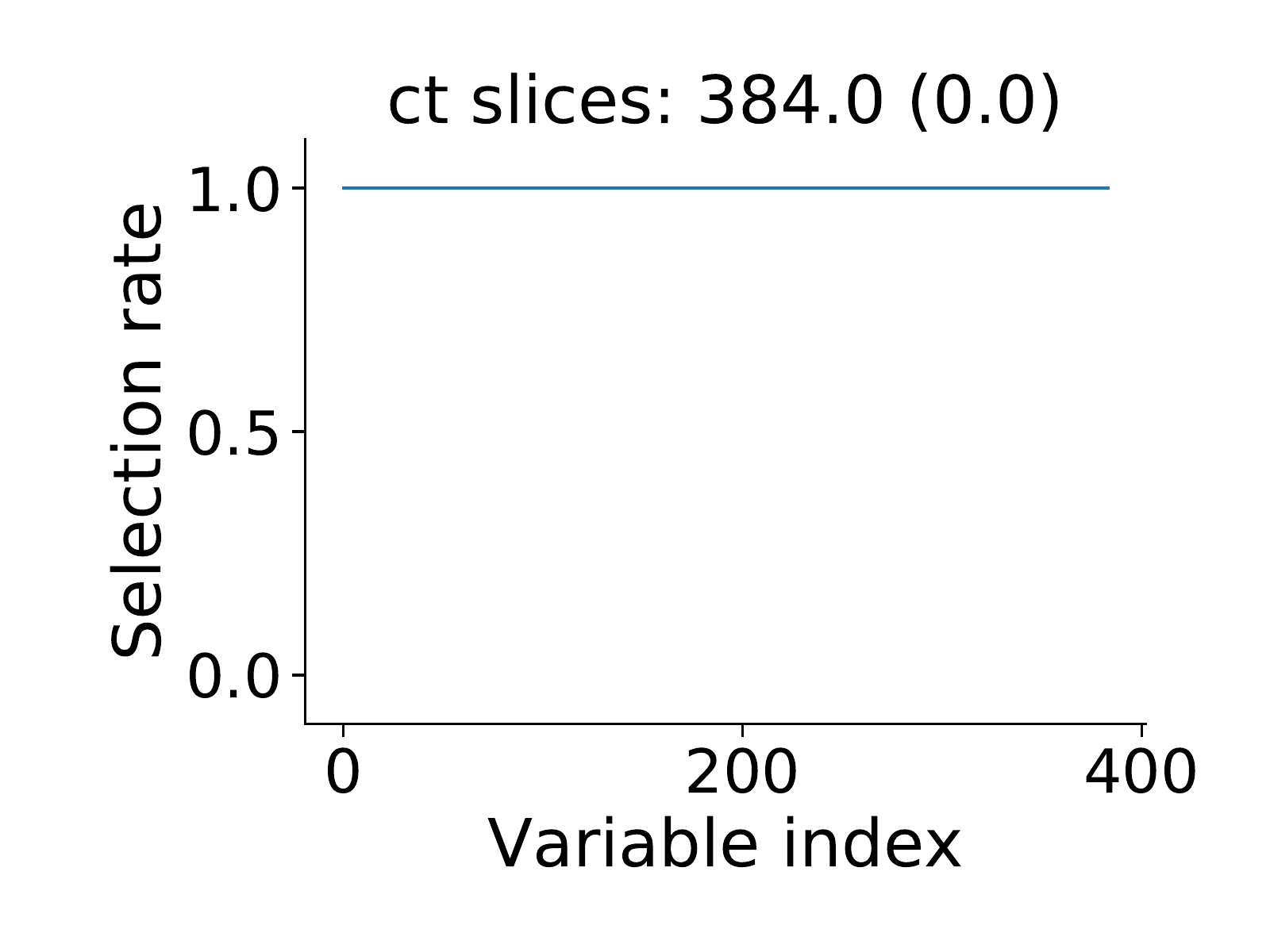}
\includegraphics[width=0.32\linewidth]{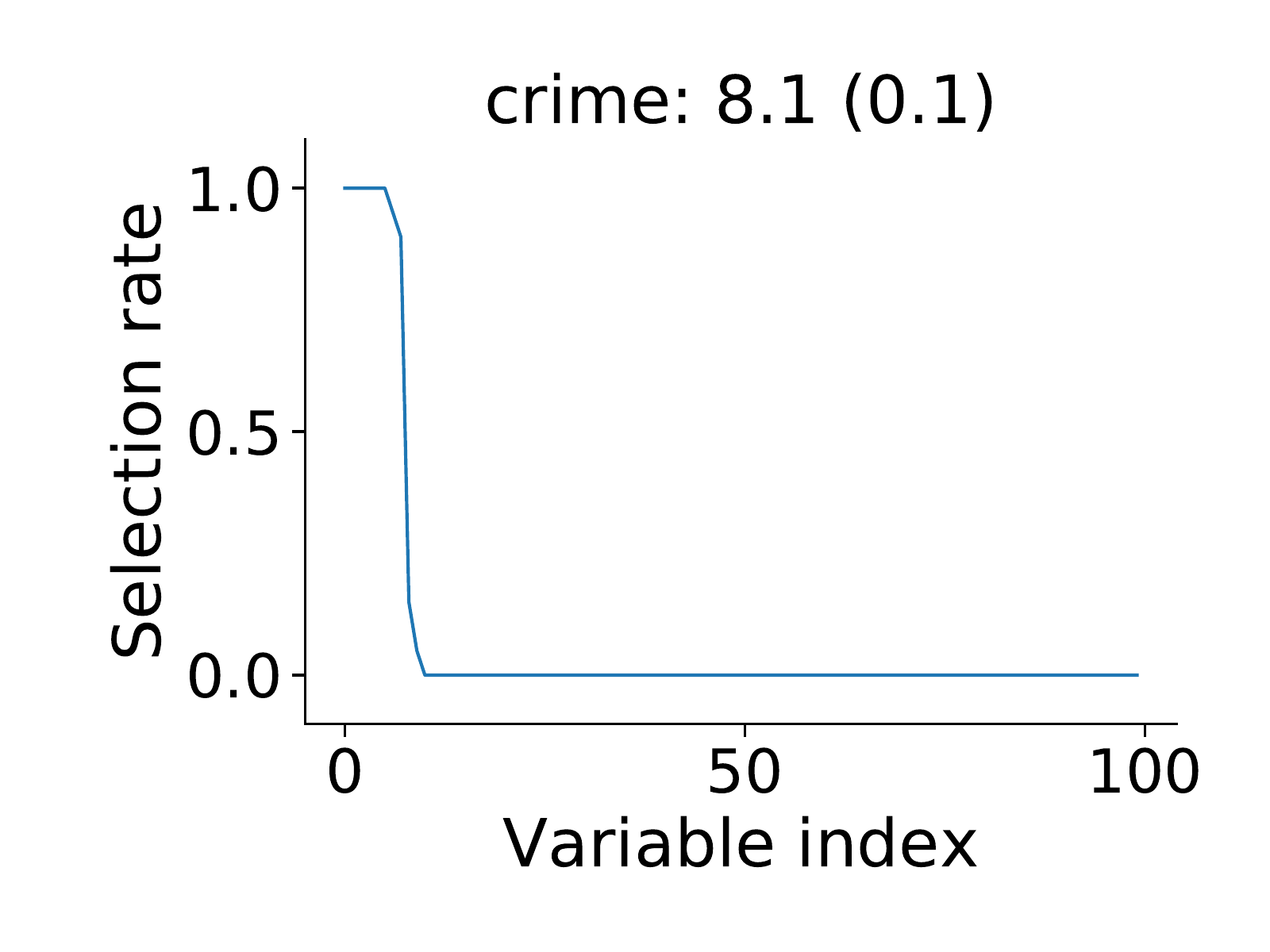}
\includegraphics[width=0.32\linewidth]{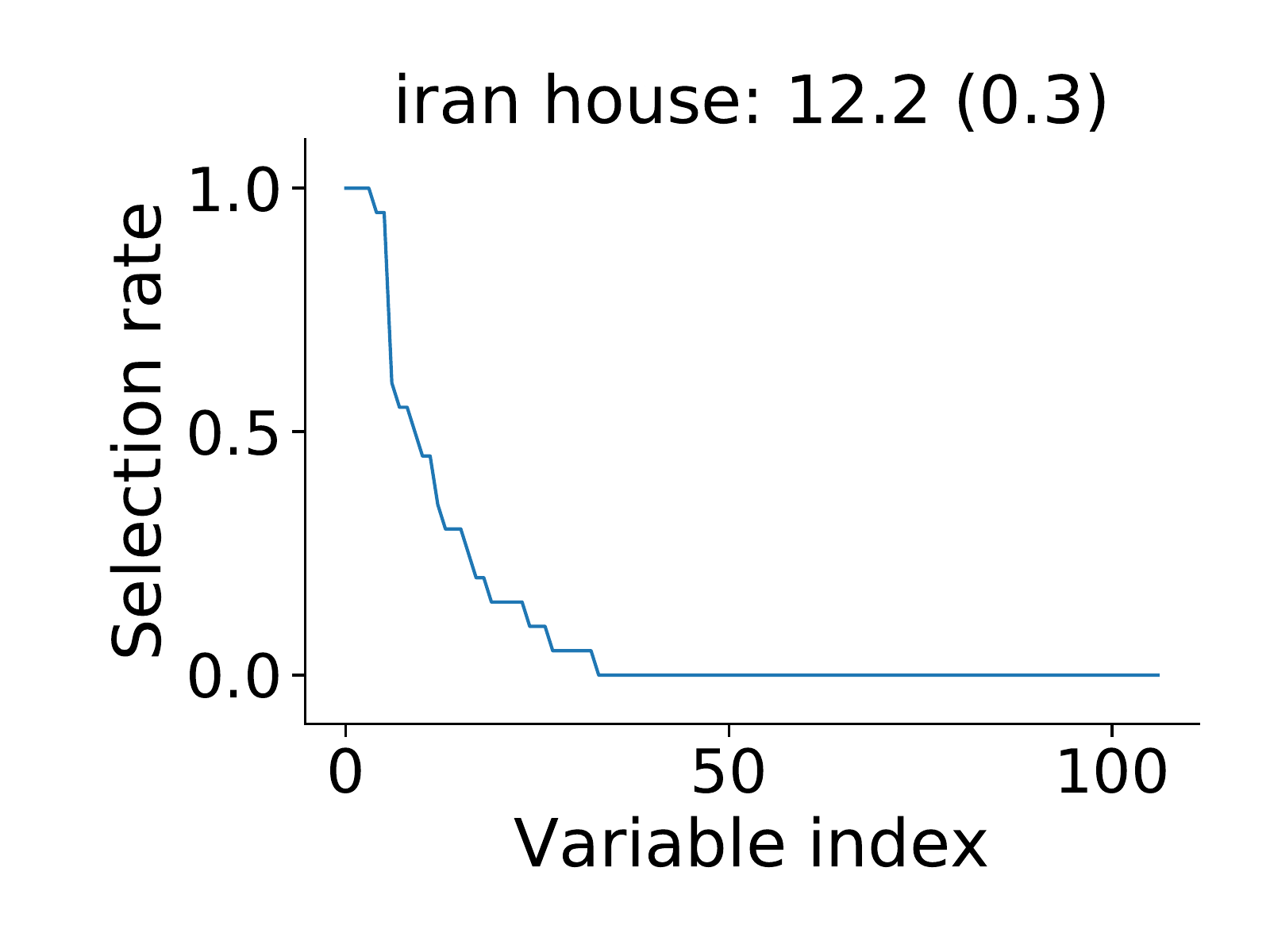}
\includegraphics[width=0.32\linewidth]{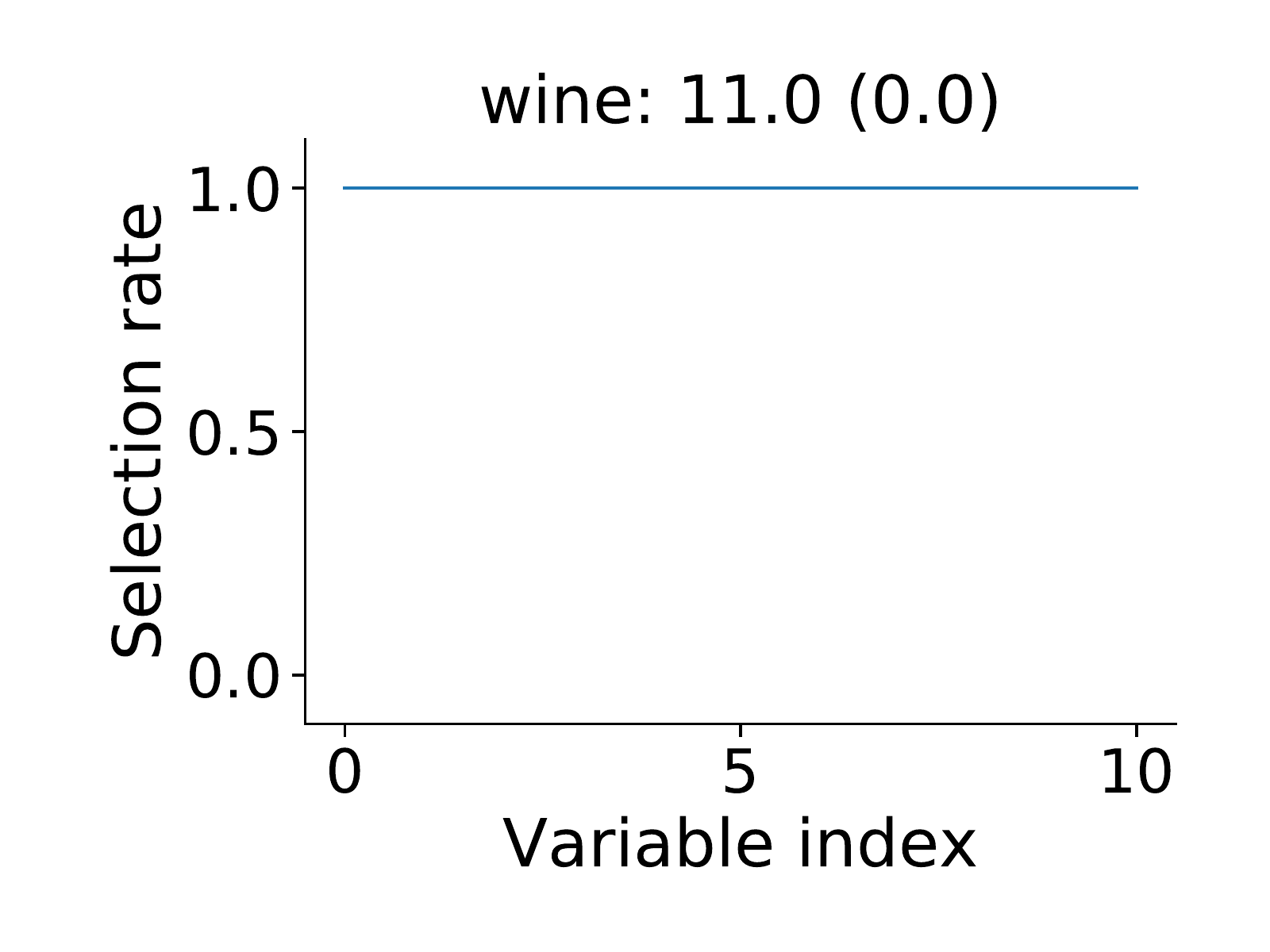}
\caption{Regression}
\label{fig:reg_support}
\end{subfigure}
\caption{
For each dataset, the proportion of member networks from EASIER-net that selected each variable, where each ensemble consists of 20 independently-trained networks.
Variables are sorted along the x-axis according to their selection rates.
The plot titles show the average support size and the standard error in parentheses.
The variable selection rates reflect the uncertainty of the true support.
}
\label{fig:uci_support}
\end{figure}

\section{Discussion}

We have shown that an ensemble of sparse-input hierarchical networks is a useful modeling technique for datasets with a small number of observations.
By employing skip-connections and sparsity-inducing penalties, this methodology can data-adaptively determine the appropriate model complexity: they can fit simple linear models, shallow neural networks, or deep networks.
Empirically, we find that EASIER-net can significantly outperform popular machine learning algorithms in terms of prediction accuracy.
Moreover, the model can do variable screening when the mutual information between the relevant and irrelevant covariates is low, and appears to induce a grouping effect when the mutual information is high.

The theoretical analyses of SIER-net in this paper provide probability bounds on the number of false negatives.
We have not bounded the number of false positives and leave this to future work.
Nevertheless, the empirical experiments demonstrate that both SIER-net and EASIER-net are able to perform support recovery under certain conditions.

Because neural networks are highly modular, our method can be easily extended.
For example, by modifying the output layer and objective function, we can output prediction sets/intervals to better quantify model uncertainty \citep{Taylor2000-iu, Feng2019-of}.
For specific problem domains, we can also tailor the network structure and sparsity pattern to reflect our prior knowledge, such as applying a group lasso to known groups of covariates \citep{Yuan2006-kz}.

Finally, interpretability of SIER-net and EASIER-net remains an issue, even with the help of sparsity.
To better understand the model's inner-workings, one may try using techniques like variable importance measures \citep{Ribeiro2016-jn, Lundberg2017-xl, Feng2018-ez}, saliency maps \citep{Simonyan2015-mo}, and influence functions \citep{Koh2017-fg}.

\section*{Acknowledgments}
The authors thank Frederick A. Matsen IV for helpful discussions and suggestions. This work was supported by NIH Grant DP5OD019820.

\appendix

\appendix
\section*{Appendix}

\section{Sparse-input hierarchical network definition}
\label{sec:define}
For the reader's convenience, the full set of equations that define a sparse-input hierarchical network $f_{W, b, \beta, \alpha}$ is given below:
\begin{align}
z_{1,i} &= \beta_i x_{i} & \forall i = 1,...,d\\
\boldsymbol{z}_{l} &= \phi(\boldsymbol{z}_{l - 1} W_{l-1} + \boldsymbol{b}_{l-1} ) & \forall l = 2,...,L - 1\\
\boldsymbol{\zeta}_l & = \boldsymbol{z}_{l } W'_{l} + \boldsymbol{b}'_{l} & \forall l = 1,...,L - 1 \\
f_{W, b, \beta, \alpha}(\boldsymbol{x}) &= \boldsymbol{z}_{L} = \phi_{\text{out}}\left(
\sum_{l=1}^{L-1} \frac{|\alpha_{l}|}{\sum_{l' = 1}^{L - 1}|\alpha_{l'}|} \boldsymbol{\zeta}_{l}
\right)
\label{eq:sier-net}
\end{align}

\section{Proofs}
\label{sec:proofs}

For convenience, we overload the notation as follows.
Consider a probability space $(\Omega, \mathcal{A}, P)$ with sample space $\Omega = \mathcal{X} \times \mathcal{Y}$, $\sigma$-algebra $\mathcal{A}$, and probability measure $P$.
For a real-valued function $f: \mathcal{X} \times \mathcal{Y} \mapsto \mathbb{R}$, let $P f(X, Y)$ denote the expected value of $f(X, Y)$.
For a set-valued function $g: \mathcal{X} \times \mathcal{Y} \mapsto \mathcal{A}$, let $P g(X, Y)$ denote the probability measure of $g(X, Y)$.

The proof for Theorem~\ref{thrm:upper} uses standard empirical process techniques.
Recall that for a given distribution $\mu$, the $\mu$-complexity of function class $\mathcal{F}$ for sample $(X_1,...,X_n)$ is defined as
\begin{align}
E\left[\sup_{f \in \mathcal{F}} \frac{1}{n} \sum_{i=1}^n \xi_i (f(X_i) - f^*(X_i)) \mid X_1,...,X_n \right]
\label{eq:complexity}
\end{align}
where $\xi \sim \mu$ are independent and identically distributed (IID).
The Gaussian complexity of $\mathcal{F}$ refers to the case when $\mu = N(0,1)$.
Let $\mathcal{G}_n(\mathcal{F})$ denote the uniform bound of the Gaussian complexity of $\mathcal{F}$ over all samples from $\mathcal{X}$.
The Rademacher complexity of $\mathcal{F}$ refers to the case when $\mu$ is the Rademacher distribution.
Analogously, let $\mathcal{R}_n(\mathcal{F})$ denote the uniform bound of the Rademacher complexity of $\mathcal{F}$ over all samples from $\mathcal{X}$.

We will use the $\mu$-complexity bounds derived in \citet{Barron2019-sb}.
We note that their bounds technically pertain to dense ReLU neural networks, not sparse-input hierarchical networks.
Nevertheless, it is easy to show that any sparse-input hierarchical network can be expressed as a dense ReLU neural network, albeit with additional hidden nodes per layer.
Thus their derived complexity bounds are also valid for sparse-input hierarchical networks.

\begin{proof}[Proof for Theorem~\ref{thrm:upper}]
	Let $P_n$ be the empirical distribution of the training data.
	Because $\hat{\theta}$ is a global empirical risk minimizer, we have that
	\begin{align*}
	P \left\{ (f_{\hat{\theta}_n}(X) -  Y)^2 - (f_{\theta^*}(X) - Y)^2 \right \}
	& \ \le
	\left(P - P_n \right )
	\left(
	(f_{\hat{\theta}_n}(X) -  Y)^2 - (f_{\theta^*}(X) - Y)^2
	\right)\\
	& \ =
	\left(P - P_n \right )
	\left(
	(f_{\hat{\theta}_n}(X) -  f_{\theta^*}(X))^2 - \epsilon (f_{\theta^*}(X) - Y)
	\right).
	\end{align*}
	Then for any $j = 1,\cdots,d$, we have that
	\begin{align}
	& P\left(
	|s - \hat{s}| \ge j
	\right)\\
	\le &
	P\left(
	P \left\{ (f_{\hat{\theta}_n}(X) - Y)^2 - (f_{\theta^*}(X) - Y)^2 \right \}
	\ge \gamma(j ; P)
	\right)\\
	\le&
	P\left(
	\left(P - P_n \right )
	\left(
	(f_{\hat{\theta}_n}(X) -  f_{\theta^*}(X))^2 - \epsilon (f_{\hat{\theta}_n}(X) - f_{\theta^*}(X))
	\right)
	\ge \gamma(j ; P)
	\right)\\
	\le&
	P\left(
	\left[
	\sup_{\theta \in \Theta_{L, B_0, B_1}}
	\left(P - P_n \right )
	(f(X) -  f_{\theta^*}(X))^2
	\right]
	+
	\left[
	\sup_{\theta \in \Theta_{L, B_0, B_1}}
	\left(P - P_n \right )
	\epsilon (f(X) - f_{\theta^*}(X))
	\right]
	\ge \gamma(j ; P)
	\right).
	\end{align}
	For convenience, let $\mathcal{F}_{L, B_0, B_1} = \{f_\theta:  \theta \in \Theta_{L, B_0, B_1}\}$.
	We can bound the above probability by the union bound
	\begin{align}
	P\left(
	|s - \hat{s}| \ge j
	\right)
	&\  \le
	P\left(
	4B_0 \mathcal{R}_n (\mathcal{F}_{L, B_0, B_1}) + \delta
	+
	\sigma\mathcal{G}_n (\mathcal{F}_{L, B_0, B_1}) + \delta
	\ge \gamma(j; P)
	\right)\\
	&\ + P\left(
	\sup_{f \in \mathcal{F}_{L, B_0, B_1}}
	\left(P - P_n \right )
	(f(X) -  f_{\theta^*}(X))^2
	- 4B_0 \mathcal{R}_n (\mathcal{F}_{L, B_0, B_1})
	\ge \delta
	\right)
	\label{eq:rademacher}
	\\
	&\ + P\left(
	\sup_{f \in \mathcal{F}_{L, B_0, B_1}}
	\left(P - P_n \right )
	\epsilon (f(X) - f_{\theta^*}(X))
	- \sigma \mathcal{G}_n (\mathcal{F}_{L, B_0, B_1})
	\ge \delta
	\right)
	\label{eq:gaussian}
	\end{align}

	So with minor modifications of the Rademacher complexity to handle squared of functions with infinity-norm bounded by $B_0$, we have that \eqref{eq:rademacher} is bounded by $\exp\left(-\frac{n \delta^2}{2 B_0^4} \right)$.
	Likewise, since $\epsilon$ is a mean-zero Gaussian RV with variance $\sigma^2$, then \eqref{eq:gaussian} is bounded by $2\exp\left(
	-\frac{n \delta^2}{8B_0^2 \sigma^2}
	\right)$.
	Combining the above inequalities, we have for all $j = 1,\cdots, d$ that
	\begin{align}
	P\left(
	|s - \hat{s}| \ge j
	\right)
	&\  \le
	\mathbbm{1}\left\{
	4B_0 \mathcal{R}_n (\mathcal{F}_{L, B_0, B_1}) + \sigma\mathcal{G}_n (\mathcal{F}_{L, B_0, B_1}) + 2\delta
	\ge \gamma(j ; P)
	\right \}
	\label{eq:complexities}
	\\
	& \ + \exp\left(
	-\frac{n \delta^2}{2 B_0^4}
	\right)\\
	& \ + 2\exp\left(
	-\frac{n \delta^2}{8B_0^2 \sigma^2}
	\right).
	\end{align}
	Finally, since \citet{Barron2019-sb} proved that $\mathcal{G}_n (\mathcal{F}_{L, B_0, B_1})$ and $\mathcal{R}_n (\mathcal{F}_{L, B_0, B_1})$ are both bounded by $B_1 \sqrt{2(L \log(2) + \log (2d))/n}$, we can plug this into \eqref{eq:complexities} and obtain our desired result.
\end{proof}

Our proof for Theorem~\ref{thrm:lower} depends on the following lemma, which is a straightforward adaptation of Proposition 15.1 in \citet{Wainwright2019-sp}.
For any positive integer $m \le \min(k, \lfloor d/2 \rfloor )$, select two distributions $P_{(1)}, P_{(2)} \in \mathcal{P}_{k, \gamma}$ such that
\begin{align}
\Delta(P_{(1)}, P_{(2)}) = \min(|s(\mu_{P_{(1)}}) \setminus s(\mu_{P_{(2)}})|, |s(\mu_{P_{(2)}}) \setminus s(\mu_{P_{(1)}})|) \ge 2m.
\label{eq:diff_support}
\end{align}
Let $Q$ denote the joint distribution over the pair of random variables $(Z, Y)$ generated using the following procedure:
\begin{enumerate}
	\item Sample $J$ from $\{1,2\}$ uniformly at random.
	\item Given  $J = j$, sample $X_1,....,X_n$ iid from $P_{(j)}$.
\end{enumerate}
Define a testing function $\psi$ as a mapping from $\mathcal{X}^n \mapsto \{1,2\}$.

\begin{lemma}
	For any positive integer $m \le \min(k, d/2)$, we have that
	\begin{align}
	\inf_{\hat{f}_n: |s(\hat{f}_n)| \le k}
	\sup_{P \in \mathcal{P}_{k, \gamma}}
	P\left(
	\left|
	s(\mu_{P}) \setminus s(\hat{f}_n)
	\right|
	\ge m
	\right)
	\ge \inf_{\psi} Q \left (\psi(X_1,....,X_n) \ne J \right )
	\end{align}
	where the infimum ranges over all test functions.
	\label{lemma:minimax}
\end{lemma}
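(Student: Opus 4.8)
I would prove Lemma~\ref{lemma:minimax} by the classical Le Cam two-point reduction (Proposition~15.1 of \citet{Wainwright2019-sp}): convert any low-support estimator into a test between $P_{(1)}$ and $P_{(2)}$, and show that accurate screening forces accurate testing. First, fix an arbitrary estimator $\hat{f}_n$ with $|s(\hat{f}_n)| \le k$, write $A = s(\mu_{P_{(1)}})$, $B = s(\mu_{P_{(2)}})$, $S = s(\hat{f}_n)$, and define the induced test $\psi(X_1,\ldots,X_n) = 1$ if $|A \setminus S| < m$ and $\psi = 2$ otherwise. I would also note that the relevant case is $|A| = |B| = k$: membership $P_{(j)} \in \mathcal{P}_{k,\gamma}$ with the constraints $\gamma(m;P) \ge \gamma(m)$ for all $m = 1,\ldots,k$ forces the support to have size $k$, and this is exactly what the two-point construction in the proof of Theorem~\ref{thrm:lower} supplies.

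The crux is a separation property: $S$ cannot lie within $m-1$ deletions of both $A$ and $B$ at once. Because $|B \setminus A| \ge \Delta(P_{(1)},P_{(2)}) \ge 2m$ and $|A| = k$, we have $|A \cup B| = |A| + |B \setminus A| \ge k + 2m$, so using $|S| \le k$,
\[
|A \setminus S| + |B \setminus S| \;\ge\; |(A \cup B) \setminus S| \;\ge\; |A \cup B| - |S| \;\ge\; 2m .
\]
Hence $|A \setminus S| < m$ implies $|B \setminus S| \ge m+1 \ge m$, and symmetrically $|B \setminus S| < m$ implies $|A \setminus S| \ge m$.

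Finally I would assemble the bound. By definition of $\psi$, $P_{(1)}^{\otimes n}(\psi = 2) = P_{(1)}^{\otimes n}(|A \setminus S| \ge m)$, while the separation step gives $P_{(2)}^{\otimes n}(\psi = 1) = P_{(2)}^{\otimes n}(|A \setminus S| < m) \le P_{(2)}^{\otimes n}(|B \setminus S| \ge m)$, so
\[
\begin{aligned}
Q\bigl(\psi(X_1,\ldots,X_n) \ne J\bigr)
&= \tfrac12 P_{(1)}^{\otimes n}(\psi = 2) + \tfrac12 P_{(2)}^{\otimes n}(\psi = 1)\\
&\le \max_{j \in \{1,2\}} P_{(j)}^{\otimes n}\bigl(|s(\mu_{P_{(j)}}) \setminus S| \ge m\bigr)
\;\le\; \sup_{P \in \mathcal{P}_{k,\gamma}} P\bigl(|s(\mu_P) \setminus s(\hat{f}_n)| \ge m\bigr).
\end{aligned}
\]
Since the left side dominates $\inf_\psi Q(\psi \ne J)$ and $\hat{f}_n$ was arbitrary among estimators with $|s(\hat{f}_n)| \le k$, taking the infimum over all such $\hat{f}_n$ yields the lemma. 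The one step with any real content is the separation inequality --- the only place the hypotheses $\Delta(P_{(1)},P_{(2)}) \ge 2m$, $|s(\hat{f}_n)| \le k$, and the full support size $k$ of the reference distributions all get used; the subtlety to get right is that one genuinely needs $\max(|A|,|B|) = k$, not merely $\le k$, since otherwise an estimator with $S \supseteq A \cup B$ would screen both supports perfectly and the reduction would collapse.
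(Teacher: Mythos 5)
Your proposal follows essentially the same route as the paper's proof: a Le Cam two-point reduction, an induced test built from the estimator's fitted support, a separation property showing that an estimator with $|s(\hat{f}_n)| \le k$ cannot be within $m-1$ omissions of both reference supports at once, and then the inf/sup bookkeeping. Your thresholded test is equivalent to the paper's $\argmin$ test, and the chain $|A\setminus S| + |B\setminus S| \ge |(A\cup B)\setminus S| \ge |A\cup B| - |S|$ together with the final averaging step is correct.

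The one claim to repair is that membership in $\mathcal{P}_{k,\gamma}$ forces $|A| = |B| = k$. It does not: in Definition~\ref{def:pred}, $\gamma(m;P)$ is a minimum over sets $\tilde{s}$ with $|s^*_P \setminus \tilde{s}| = m$, and when $m$ exceeds $|s^*_P|$ this collection is empty, so the constraint $\gamma(m;P) \ge \gamma(m)$ is vacuous under the usual convention and distributions with support strictly smaller than $k$ remain in the class. That said, the underlying concern you flag is genuine: if $|s(\mu_{P_{(1)}}) \cup s(\mu_{P_{(2)}})| \le k$, an estimator can cover both supports and the reduction collapses, and the paper's own proof elides this by asserting the separation from $|S| \le k$ and $\Delta(P_{(1)},P_{(2)}) \ge 2m$ alone. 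The clean fix is not to derive full support size from the class definition but to impose it (equivalently, $|s(\mu_{P_{(1)}}) \cup s(\mu_{P_{(2)}})| > k + 2(m-1)$) as part of the selection of the two reference distributions preceding the lemma --- a restriction that the application in Theorem~\ref{thrm:lower} can accommodate, since one only loses pairs whose supports are too small to witness the bound. With that restriction stated explicitly, your argument (and the paper's) goes through verbatim.
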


\begin{proof}
	For any $M$, we have that
	\begin{align}
	\sup_{P \in \mathcal{P}_{k, \gamma}}
	P\left(
	\left|
	s(\mu_P) \setminus s(\hat{f}_n)
	\right|
	\ge m
	\right)
	\ge
	\frac{1}{2} \sum_{j=1}^2
	P_{(j)}\left( \left| s(\mu_{P_{(j)}}) \setminus s(\hat{f}_n) \right | \ge m \right).
	\end{align}
	Define the testing function $\psi(X_1,...,X_n) = \argmin_{j \in \{1,2\}} \left| s(\mu_{P_{(j)}}) \setminus s(\hat{f}_n) \right |$.
	We next show that if $\mu_{P_{(j)}}$ is the true distribution, the event $\left [\left| s(\mu_{P_{(j)}}) \setminus s(\hat{f}_n) \right | \le m \right ]$ implies that $\psi(X_1,...,X_n) = j$.
	Because $\hat{f}_n$ can choose at most $k$ elements in the support and the assumption that $\mu_{P_{(1)}}$ and $\mu_{P_{(2)}}$ differ by at least $2m$ elements in their support, we have that if $\left| s(\mu_{P_{(j)}}) \setminus s(\hat{f}_n) \right | \le m$ for some $j = 1,2$, then $\left| s(\mu_{P_{(j')}}) \setminus s(\hat{f}_n) \right | \ge m$ for $j' \ne j$.
	Moreover, this implies that $\psi(X_1,...,X_n) = j$.
	Thus, we have established that
	\begin{align}
	\frac{1}{2} \sum_{j=1}^2
	P_{(j)}\left( \left| s(\mu_{P_{(j)}}) \setminus s(\hat{f}_n) \right | \ge m \right)
	\ge Q\left (\psi(X_1,....,X_n) \ne J \right ).
	\end{align}
	Finally, take the infimum with respect to all estimators on the left hand side and the infimum over all induced tests on the right hand side.
	Since the full infimum can only be smaller, we have established the desired result.
\end{proof}

\begin{proof}[Proof for Theorem~\ref{thrm:lower}]
	From Le Cam's inequality, for any $P_{(1)}, P_{(2)} \in \mathcal{P}_{k, \gamma}$ that satisfy \eqref{eq:diff_support}, we have that
	\begin{align}
	\inf_{\psi} Q\left (\psi(X_1,....,X_n) \ne J \right ) \ge \frac{1}{2}
	\left( 1 - \left \| P_{(1)}^n - P_{(2)}^n \right \|_{TV} \right),
	\label{eq:le_cam}
	\end{align}
	where $\|\cdot\|_{TV}$ is the total variation norm for probability distributions.
	We then lower bound the right hand side by relating the KL-divergence to the total variation norm as follows:
	\begin{align}
	\left \| P_{(1)}^n - P_{(2)}^n \right \|_{TV} \le \sqrt{\frac{1}{2} D(P_{(1)}^n || P_{(2)}^n)} =\sqrt{\frac{n}{2} D(P_{(1)} || P_{(2)})}.
	\label{eq:kl_tv}
	\end{align}
	Since we assumed that $\epsilon \sim N(0,\sigma^2)$, the squared error loss $\ell$ is equal to the negative log likelihood scaled by $\sigma^2$.
	Thus, $\gamma(2m)/ \sigma^2$ is the minimum KL-divergence between two functions in $\mathcal{P}_{k, \gamma}$ with support differing by $2m$, i.e.
	\begin{align}
	\inf_{P_{(1)}, P_{(2)} \in \mathcal{P}_{k, \gamma}: \Delta(P_{(1)}, P_{(2)})\ge 2m } D(P_{(1)} || P_{(2)}) = \gamma(2m)/\sigma^2 .
	\label{eq:gamma_bound}
	\end{align}
	Combining the above results with Lemma~\ref{lemma:minimax}, we have that
	\begin{align}
	\inf_{\hat{f}_n: |s(\hat{f}_n)| \le k}
	\sup_{P \in \mathcal{P}_{k, \gamma}}
	P\left(
	\left|
	s(\mu_P) \setminus s(\hat{f}_n)
	\right|
	\ge m
	\right)
	& \ge
	\sup_{P_{(1)}, P_{(2)} \in \mathcal{P}_{k, \gamma}: \Delta(P_{(1)}, P_{(2)}) \ge 2m }
	\frac{1}{2}
	\left( 1 - \left \| P_{(1)}^n - P_{(2)}^n \right \|_{TV} \right)\\
	& \ge
	\frac{1}{2} \left(
	1 - \sqrt{\frac{n}{2} \gamma(2m)/\sigma^2 }
	\right),
	\end{align}
	where the first inequality follows from taking the supremum over the right hand side of \eqref{eq:le_cam}.
\end{proof}

\section{Hyperparameter default values}
\label{sec:hyperparam}

We recommend selecting the size of the ensemble $B$ to be sufficiently large for its predicted value to plateau.
We found that $B = 20$ worked well in all our experiments.
We initialize the sparse-input hierarchical networks to be sufficiently wide and deep to obtain a small training loss.
In the empirical analyses, we use 5 hidden layers and 100 hidden nodes per layer.
To perform penalized empirical minimization, we run Adam with the default learning rates and parameters until convergence and then batch proximal gradient descent until convergence.
We use a mini-batch size that is one third of the total size of the data.

To speed up K-fold cross-validation, one could try to fit a smaller ensemble for each candidate penalty parameter set.
In our work, we used $B = 10$ for tuning the penalty parameters.

\begin{table}
	\centering
	{\small
		\begin{tabular}{ccccc}
			Dataset & \# features & \# observations & \# classes & Held-out proportion\\
			\toprule
			\multicolumn{5}{c}{\textit{Classification}}\\
			soybean & 35 & 307 & 19 & 1/4 \\
			arrythmia & 279 & 452 & 13& 1/4 \\
			gene cancer C & 20531 & 801 & 5& 1/3 \\
			hill valley & 100 & 606 & 2 & 1/3\\
			semeion & 256 & 1593 & 10 & 1/3\\
			\midrule
			\multicolumn{5}{c}{\textit{Regression}}\\
			boston & 13 & 506 & --& 1/3\\
			gene cancer R & 20530 & 801 & -- & 1/3\\
			CT slices & 384 & 53500& --& 1/3\\
			crime & 122 & 1994 & --& 1/3\\
			Iran house & 103 & 372 & --& 1/3\\
			wine & 11 & 4898 & --& 1/3
		\end{tabular}
	}
	\caption{
		Summary statistics for the selected datasets from the UCI Machine Learning Repository.
		The datasets were chosen to represent varying dataset shapes and sizes.
		One third of the data was held out for testing unless a random split resulted in no samples from a particular class.
		The gene cancer regression (gene cancer R) task is a derivative of the gene cancer classification (gene cancer C) task, where we try to predict the expression value for the first gene in the dataset instead of the type of cancer.
	}
	\label{table:dataset_descrips}
\end{table}

\bibliographystyle{plainnat}
\bibliography{main}

\begin{thebibliography}{54}
\providecommand{\natexlab}[1]{#1}
\providecommand{\url}[1]{\texttt{#1}}
\expandafter\ifx\csname urlstyle\endcsname\relax
  \providecommand{\doi}[1]{doi: #1}\else
  \providecommand{\doi}{doi: \begingroup \urlstyle{rm}\Url}\fi

\bibitem[Abadi et~al.(2016)Abadi, Agarwal, Barham, Brevdo, Chen, Citro,
  Corrado, Davis, Dean, Devin, Ghemawat, Goodfellow, Harp, Irving, Isard, Jia,
  Jozefowicz, Kaiser, Kudlur, Levenberg, Mane, Monga, Moore, Murray, Olah,
  Schuster, Shlens, Steiner, Sutskever, Talwar, Tucker, Vanhoucke, Vasudevan,
  Viegas, Vinyals, Warden, Wattenberg, Wicke, Yu, and Zheng]{Abadi2016-zh}
Mart{\'\i}n Abadi, Ashish Agarwal, Paul Barham, Eugene Brevdo, Zhifeng Chen,
  Craig Citro, Greg~S Corrado, Andy Davis, Jeffrey Dean, Matthieu Devin, Sanjay
  Ghemawat, Ian Goodfellow, Andrew Harp, Geoffrey Irving, Michael Isard,
  Yangqing Jia, Rafal Jozefowicz, Lukasz Kaiser, Manjunath Kudlur, Josh
  Levenberg, Dan Mane, Rajat Monga, Sherry Moore, Derek Murray, Chris Olah,
  Mike Schuster, Jonathon Shlens, Benoit Steiner, Ilya Sutskever, Kunal Talwar,
  Paul Tucker, Vincent Vanhoucke, Vijay Vasudevan, Fernanda Viegas, Oriol
  Vinyals, Pete Warden, Martin Wattenberg, Martin Wicke, Yuan Yu, and Xiaoqiang
  Zheng.
\newblock {TensorFlow}: {Large-Scale} machine learning on heterogeneous
  distributed systems.
\newblock March 2016.

\bibitem[Bardsley et~al.(2014)Bardsley, Solonen, Haario, and
  Laine]{Bardsley2014-bp}
Johnathan~M Bardsley, Antti Solonen, Heikki Haario, and Marko Laine.
\newblock {Randomize-Then-Optimize}: A method for sampling from posterior
  distributions in nonlinear inverse problems.
\newblock \emph{SIAM J. Sci. Comput.}, 36\penalty0 (4):\penalty0 A1895--A1910,
  January 2014.
\newblock ISSN 1064-8275.
\newblock \doi{10.1137/140964023}.

\bibitem[Barron and Klusowski(2019)]{Barron2019-sb}
Andrew~R Barron and Jason~M Klusowski.
\newblock Complexity, statistical risk, and metric entropy of deep nets using
  total path variation.
\newblock \emph{arXiv}, February 2019.

\bibitem[Bengio(2012)]{Bengio2012-zq}
Yoshua Bengio.
\newblock Practical recommendations for {Gradient-Based} training of deep
  architectures.
\newblock In Gr{\'e}goire Montavon, Genevi{\`e}ve~B Orr, and Klaus-Robert
  M{\"u}ller, editors, \emph{Neural Networks: Tricks of the Trade: Second
  Edition}, pages 437--478. Springer Berlin Heidelberg, Berlin, Heidelberg,
  2012.
\newblock ISBN 9783642352898.
\newblock \doi{10.1007/978-3-642-35289-8\_26}.

\bibitem[Bergstra and Bengio(2012)]{Bergstra2012-ct}
James Bergstra and Yoshua Bengio.
\newblock Random search for {Hyper-Parameter} optimization.
\newblock \emph{J. Mach. Learn. Res.}, 13\penalty0 (Feb):\penalty0 281--305,
  2012.
\newblock ISSN 1532-4435, 1533-7928.

\bibitem[Breiman(1996)]{Breiman1996-bf}
Leo Breiman.
\newblock Bagging predictors.
\newblock \emph{Mach. Learn.}, 24\penalty0 (2):\penalty0 123--140, August 1996.
\newblock ISSN 0885-6125, 1573-0565.
\newblock \doi{10.1023/A:1018054314350}.

\bibitem[B{\"u}hlmann and van~de Geer(2011)]{Buhlmann2011-ko}
Peter B{\"u}hlmann and Sara van~de Geer.
\newblock \emph{Statistics for {High-Dimensional} Data: Methods, Theory and
  Applications}.
\newblock Springer, Berlin, Heidelberg, 2011.
\newblock ISBN 9783642201912.
\newblock \doi{10.1007/978-3-642-20192-9}.

\bibitem[Dua and Graff(2017)]{Dua:2019}
Dheeru Dua and Casey Graff.
\newblock {UCI} machine learning repository.
\newblock \emph{arXiv}, 2017.
\newblock URL \url{http://archive.ics.uci.edu/ml}.

\bibitem[Feng and Simon(2019)]{Feng2019-nw}
Jean Feng and Noah Simon.
\newblock {Sparse-Input} neural networks for high-dimensional nonparametric
  regression and classification.
\newblock \emph{arXiv}, 2019.

\bibitem[Feng et~al.(2018)Feng, Williamson, Simon, and Carone]{Feng2018-ez}
Jean Feng, Brian Williamson, Noah Simon, and Marco Carone.
\newblock Nonparametric variable importance using an augmented neural network
  with multi-task learning.
\newblock \emph{ICML}, 80:\penalty0 1496--1505, 2018.

\bibitem[Feng et~al.(2019)Feng, Sondhi, Perry, and Simon]{Feng2019-of}
Jean Feng, Arjun Sondhi, Jessica Perry, and Noah Simon.
\newblock Selective prediction-set models with coverage guarantees.
\newblock \emph{arXiv}, June 2019.

\bibitem[Glorot et~al.(2011)Glorot, Bordes, and Bengio]{Glorot2011-pd}
Xavier Glorot, Antoine Bordes, and Yoshua Bengio.
\newblock Deep sparse rectifier neural networks.
\newblock \emph{International Conference on Artificial Intelligence and
  Statistics}, 2011.

\bibitem[Goodfellow et~al.(2016)Goodfellow, Bengio, and
  Courville]{Goodfellow2016-sw}
Ian Goodfellow, Yoshua Bengio, and Aaron Courville.
\newblock \emph{Deep Learning}.
\newblock MIT Press, November 2016.
\newblock ISBN 9780262337373.

\bibitem[Guvenir et~al.(1997)Guvenir, Acar, Demiroz, and Cekin]{Guvenir1997-uk}
H~A Guvenir, B~Acar, G~Demiroz, and A~Cekin.
\newblock A supervised machine learning algorithm for arrhythmia analysis.
\newblock In \emph{Computers in Cardiology 1997}, pages 433--436.
  ieeexplore.ieee.org, September 1997.
\newblock \doi{10.1109/CIC.1997.647926}.

\bibitem[Guyon et~al.(2005)Guyon, Gunn, Ben-Hur, and Dror]{Guyon2005-fp}
Isabelle Guyon, Steve Gunn, Asa Ben-Hur, and Gideon Dror.
\newblock Result analysis of the {NIPS} 2003 feature selection challenge.
\newblock In L~K Saul, Y~Weiss, and L~Bottou, editors, \emph{Advances in Neural
  Information Processing Systems 17}, pages 545--552. MIT Press, 2005.

\bibitem[Han et~al.(2015)Han, Pool, Tran, and Dally]{Han2015-wz}
Song Han, Jeff Pool, John Tran, and William Dally.
\newblock Learning both weights and connections for efficient neural network.
\newblock In C~Cortes, N~D Lawrence, D~D Lee, M~Sugiyama, and R~Garnett,
  editors, \emph{Advances in Neural Information Processing Systems 28}, pages
  1135--1143. Curran Associates, Inc., 2015.

\bibitem[Hanson and Pratt(1989)]{Hanson1989-cj}
Stephen~Jose Hanson and Lorien~Y Pratt.
\newblock Comparing biases for minimal network construction with
  {Back-Propagation}.
\newblock In D~S Touretzky, editor, \emph{Advances in Neural Information
  Processing Systems 1}, pages 177--185. Morgan-Kaufmann, 1989.

\bibitem[He et~al.(2016)He, Zhang, Ren, and Sun]{He2016-qz}
K~He, X~Zhang, S~Ren, and J~Sun.
\newblock Deep residual learning for image recognition.
\newblock In \emph{2016 {IEEE} Conference on Computer Vision and Pattern
  Recognition ({CVPR})}, pages 770--778, June 2016.
\newblock \doi{10.1109/CVPR.2016.90}.

\bibitem[Hoeting et~al.(1999)Hoeting, Madigan, Raftery, and
  Volinsky]{Hoeting1999-pg}
Jennifer~A Hoeting, David Madigan, Adrian~E Raftery, and Chris~T Volinsky.
\newblock Bayesian model averaging: a tutorial.
\newblock \emph{Stat. Sci.}, 14\penalty0 (4):\penalty0 382--417, November 1999.
\newblock ISSN 0883-4237, 2168-8745.
\newblock \doi{10.1214/ss/1009212519}.

\bibitem[Huang et~al.(2017)Huang, Liu, d.~Maaten, and Weinberger]{Huang2017-bt}
G~Huang, Z~Liu, L~v d.~Maaten, and K~Q Weinberger.
\newblock Densely connected convolutional networks.
\newblock In \emph{2017 {IEEE} Conference on Computer Vision and Pattern
  Recognition ({CVPR})}, pages 2261--2269, July 2017.
\newblock \doi{10.1109/CVPR.2017.243}.

\bibitem[Kingma and Ba(2015)]{Kingma2015-oy}
Diederik~P Kingma and Jimmy Ba.
\newblock Adam: A method for stochastic optimization.
\newblock \emph{International Conference for Learning Representations}, 2015.

\bibitem[Koh and Liang(2017)]{Koh2017-fg}
Pang~Wei Koh and Percy Liang.
\newblock Understanding black-box predictions via influence functions.
\newblock \emph{Proceedings of the 34th International Conference on Machine
  Learning}, 2017.

\bibitem[Lakshminarayanan et~al.(2017)Lakshminarayanan, Pritzel, and
  Blundell]{Lakshminarayanan2017-wn}
Balaji Lakshminarayanan, Alexander Pritzel, and Charles Blundell.
\newblock Simple and scalable predictive uncertainty estimation using deep
  ensembles.
\newblock In I~Guyon, U~V Luxburg, S~Bengio, H~Wallach, R~Fergus,
  S~Vishwanathan, and R~Garnett, editors, \emph{Advances in Neural Information
  Processing Systems 30}, pages 6402--6413. Curran Associates, Inc., 2017.

\bibitem[LeCun et~al.(1990)LeCun, Denker, and Solla]{LeCun1990-ng}
Yann LeCun, John~S Denker, and Sara~A Solla.
\newblock Optimal brain damage.
\newblock In D~S Touretzky, editor, \emph{Advances in Neural Information
  Processing Systems 2}, pages 598--605. Morgan-Kaufmann, 1990.

\bibitem[Liang et~al.(2018)Liang, Li, and Zhou]{Liang2018-ev}
Faming Liang, Qizhai Li, and Lei Zhou.
\newblock Bayesian neural networks for selection of drug sensitive genes.
\newblock \emph{J. Am. Stat. Assoc.}, 113\penalty0 (523):\penalty0 955--972,
  July 2018.
\newblock ISSN 0162-1459.
\newblock \doi{10.1080/01621459.2017.1409122}.

\bibitem[Louizos et~al.(2018)Louizos, Welling, and Kingma]{Louizos2018-mx}
Christos Louizos, Max Welling, and Diederik~P Kingma.
\newblock Learning sparse neural networks through l\_0 regularization.
\newblock \emph{International Conference on Learning Representations}, February
  2018.

\bibitem[Lu and Van~Roy(2017)]{Lu2017-kz}
Xiuyuan Lu and Benjamin Van~Roy.
\newblock Ensemble sampling.
\newblock \emph{Advances in Neural Information Processing Systems}, pages
  3258--3266, 2017.

\bibitem[Lundberg and Lee(2017)]{Lundberg2017-xl}
Scott~M Lundberg and Su-In Lee.
\newblock A unified approach to interpreting model predictions.
\newblock \emph{Advances in Neural Information Processing Systems 30}, pages
  4765--4774, 2017.

\bibitem[MacKay(1996)]{MacKay1996-ey}
David J~C MacKay.
\newblock Bayesian {Non-Linear} modeling for the prediction competition.
\newblock In Glenn~R Heidbreder, editor, \emph{Maximum Entropy and Bayesian
  Methods: Santa Barbara, California, {U.S.A.}, 1993}, pages 221--234. Springer
  Netherlands, Dordrecht, 1996.
\newblock ISBN 9789401587297.
\newblock \doi{10.1007/978-94-015-8729-7\_18}.

\bibitem[Madigan and Raftery(1994)]{Madigan1994-ni}
David Madigan and Adrian~E Raftery.
\newblock Model selection and accounting for model uncertainty in graphical
  models using occam's window.
\newblock \emph{J. Am. Stat. Assoc.}, 89\penalty0 (428):\penalty0 1535--1546,
  1994.
\newblock ISSN 0162-1459.
\newblock \doi{10.2307/2291017}.

\bibitem[Mandt et~al.(2017)Mandt, Hoffman, and Blei]{Mandt2017-xk}
Stephan Mandt, Matthew~D Hoffman, and David~M Blei.
\newblock Stochastic gradient descent as approximate bayesian inference.
\newblock \emph{J. Mach. Learn. Res.}, January 2017.
\newblock ISSN 1532-4435.

\bibitem[Neal(1996)]{Neal1996-at}
Radford~M Neal.
\newblock \emph{Bayesian Learning for Neural Networks}.
\newblock Lecture Notes in Statistics. 1996.
\newblock ISBN 9780387947242.
\newblock \doi{10.1007/978-1-4612-0745-0}.

\bibitem[Neal and Zhang(2006)]{Neal2006-jb}
Radford~M Neal and Jianguo Zhang.
\newblock High dimensional classification with bayesian neural networks and
  dirichlet diffusion trees.
\newblock In Isabelle Guyon, Masoud Nikravesh, Steve Gunn, and Lotfi~A Zadeh,
  editors, \emph{Feature Extraction: Foundations and Applications}, pages
  265--296. Springer Berlin Heidelberg, Berlin, Heidelberg, 2006.
\newblock ISBN 9783540354888.
\newblock \doi{10.1007/978-3-540-35488-8\_11}.

\bibitem[Olson et~al.(2018)Olson, Wyner, and Berk]{Olson2018-vs}
Matthew Olson, Abraham Wyner, and Richard Berk.
\newblock Modern neural networks generalize on small data sets.
\newblock In S~Bengio, H~Wallach, H~Larochelle, K~Grauman, N~Cesa-Bianchi, and
  R~Garnett, editors, \emph{Advances in Neural Information Processing Systems
  31}, pages 3619--3628. Curran Associates, Inc., 2018.

\bibitem[Parikh and Boyd(2014)]{Parikh2014-kd}
Neal Parikh and Stephen Boyd.
\newblock Proximal algorithms.
\newblock \emph{Foundations and Trends\textregistered{} in Optimization},
  1\penalty0 (3):\penalty0 127--239, 2014.
\newblock ISSN 2167-3888.
\newblock \doi{10.1561/2400000003}.

\bibitem[Paszke et~al.(2019)Paszke, Gross, Massa, Lerer, Bradbury, Chanan,
  Killeen, Lin, Gimelshein, Antiga, Desmaison, Kopf, Yang, DeVito, Raison,
  Tejani, Chilamkurthy, Steiner, Fang, Bai, and Chintala]{Paszke2019-tz}
Adam Paszke, Sam Gross, Francisco Massa, Adam Lerer, James Bradbury, Gregory
  Chanan, Trevor Killeen, Zeming Lin, Natalia Gimelshein, Luca Antiga, Alban
  Desmaison, Andreas Kopf, Edward Yang, Zachary DeVito, Martin Raison, Alykhan
  Tejani, Sasank Chilamkurthy, Benoit Steiner, Lu~Fang, Junjie Bai, and Soumith
  Chintala.
\newblock {PyTorch}: An imperative style, {High-Performance} deep learning
  library.
\newblock In \emph{Advances in Neural Information Processing Systems 32}, pages
  8026--8037. Curran Associates, Inc., 2019.

\bibitem[Pearce et~al.(2020)Pearce, Leibfried, Brintrup, Zaki, and
  Neely]{Pearce2020-ee}
Tim Pearce, Felix Leibfried, Alexandra Brintrup, Mohamed Zaki, and Andy Neely.
\newblock Uncertainty in neural networks: Approximately bayesian ensembling.
\newblock \emph{International Conference on Artificial Intelligence and
  Statistics}, 2020.

\bibitem[Raftery et~al.(1997)Raftery, Madigan, and Hoeting]{Raftery1997-pr}
Adrian~E Raftery, David Madigan, and Jennifer~A Hoeting.
\newblock Bayesian model averaging for linear regression models.
\newblock \emph{J. Am. Stat. Assoc.}, 92\penalty0 (437):\penalty0 179--191,
  March 1997.
\newblock ISSN 0162-1459.
\newblock \doi{10.1080/01621459.1997.10473615}.

\bibitem[Ribeiro et~al.(2016)Ribeiro, Singh, and Guestrin]{Ribeiro2016-jn}
Marco~Tulio Ribeiro, Sameer Singh, and Carlos Guestrin.
\newblock `` why should i trust you?'' explaining the predictions of any
  classifier.
\newblock In \emph{Proceedings of the 22nd {ACM} {SIGKDD} international
  conference on knowledge discovery and data mining}, pages 1135--1144.
  dl.acm.org, 2016.

\bibitem[Ripley and Hjort(1996)]{Ripley1996-ub}
Brian~D Ripley and N~L Hjort.
\newblock \emph{Pattern Recognition and Neural Networks}.
\newblock Cambridge University Press, January 1996.
\newblock ISBN 9780521460866.

\bibitem[Scardapane et~al.(2017)Scardapane, Comminiello, Hussain, and
  Uncini]{Scardapane2017-yj}
Simone Scardapane, Danilo Comminiello, Amir Hussain, and Aurelio Uncini.
\newblock Group sparse regularization for deep neural networks.
\newblock \emph{Neurocomputing}, 241:\penalty0 81--89, June 2017.
\newblock ISSN 0925-2312.
\newblock \doi{10.1016/j.neucom.2017.02.029}.

\bibitem[Simon et~al.(2013)Simon, Friedman, Hastie, and
  Tibshirani]{Simon2013-tq}
Noah Simon, Jerome Friedman, Trevor Hastie, and Robert Tibshirani.
\newblock A {Sparse-Group} lasso.
\newblock \emph{J. Comput. Graph. Stat.}, 22\penalty0 (2):\penalty0 231--245,
  April 2013.
\newblock ISSN 1061-8600.
\newblock \doi{10.1080/10618600.2012.681250}.

\bibitem[Simonyan and Zisserman(2015)]{Simonyan2015-mo}
Karen Simonyan and Andrew Zisserman.
\newblock Very deep convolutional networks for {Large-Scale} image recognition.
\newblock \emph{International Conference on Learning Representations}, 2015.

\bibitem[Snoek et~al.(2012)Snoek, Larochelle, and Adams]{Snoek2012-zd}
Jasper Snoek, Hugo Larochelle, and Ryan~P Adams.
\newblock Practical bayesian optimization of machine learning algorithms.
\newblock In F~Pereira, C~J~C Burges, L~Bottou, and K~Q Weinberger, editors,
  \emph{Advances in Neural Information Processing Systems 25}, pages
  2951--2959. Curran Associates, Inc., 2012.

\bibitem[Taylor(2000)]{Taylor2000-iu}
James~W Taylor.
\newblock A quantile regression neural network approach to estimating the
  conditional density of multiperiod returns.
\newblock \emph{J. Forecast.}, 19\penalty0 (4):\penalty0 299--311, 2000.
\newblock ISSN 0277-6693.

\bibitem[Viallefont et~al.(2001)Viallefont, Raftery, and
  Richardson]{Viallefont2001-wa}
V~Viallefont, A~E Raftery, and S~Richardson.
\newblock Variable selection and bayesian model averaging in case-control
  studies.
\newblock \emph{Stat. Med.}, 20\penalty0 (21):\penalty0 3215--3230, November
  2001.
\newblock ISSN 0277-6715.
\newblock \doi{10.1002/sim.976}.

\bibitem[Wainwright(2019)]{Wainwright2019-sp}
Martin~J Wainwright.
\newblock \emph{{High-Dimensional} Statistics: A {Non-Asymptotic} Viewpoint}.
\newblock Cambridge University Press, February 2019.
\newblock ISBN 9781108498029.

\bibitem[Wen et~al.(2016)Wen, Wu, Wang, Chen, and Li]{Wen2016-lo}
Wei Wen, Chunpeng Wu, Yandan Wang, Yiran Chen, and Hai Li.
\newblock Learning structured sparsity in deep neural networks.
\newblock In D~D Lee, M~Sugiyama, U~V Luxburg, I~Guyon, and R~Garnett, editors,
  \emph{Advances in Neural Information Processing Systems 29}, pages
  2074--2082. Curran Associates, Inc., 2016.

\bibitem[Wilson and Izmailov(2020)]{Wilson2020-jw}
Andrew~Gordon Wilson and Pavel Izmailov.
\newblock Bayesian deep learning and a probabilistic perspective of
  generalization.
\newblock \emph{arXiv}, February 2020.

\bibitem[Yoon and Hwang(2017)]{Yoon2017-ao}
Jaehong Yoon and Sung~Ju Hwang.
\newblock Combined group and exclusive sparsity for deep neural networks.
\newblock In Doina Precup and Yee~Whye Teh, editors, \emph{Proceedings of the
  34th International Conference on Machine Learning}, volume~70 of
  \emph{Proceedings of Machine Learning Research}, pages 3958--3966,
  International Convention Centre, Sydney, Australia, 2017. PMLR.

\bibitem[Yu et~al.(2012)Yu, Seide, Li, and Deng]{Yu2012-ka}
D~Yu, F~Seide, G~Li, and L~Deng.
\newblock Exploiting sparseness in deep neural networks for large vocabulary
  speech recognition.
\newblock In \emph{2012 {IEEE} International Conference on Acoustics, Speech
  and Signal Processing ({ICASSP})}, pages 4409--4412, March 2012.
\newblock \doi{10.1109/ICASSP.2012.6288897}.

\bibitem[Yuan and Lin(2006)]{Yuan2006-kz}
Ming Yuan and Yi~Lin.
\newblock Model selection and estimation in regression with grouped variables.
\newblock \emph{J. R. Stat. Soc. Series B Stat. Methodol.}, 68\penalty0
  (1):\penalty0 49--67, 2006.
\newblock ISSN 1369-7412.

\bibitem[Zhang et~al.(2017)Zhang, Bengio, Hardt, Recht, and
  Vinyals]{Zhang2017-ws}
Chiyuan Zhang, Samy Bengio, Moritz Hardt, Benjamin Recht, and Oriol Vinyals.
\newblock Understanding deep learning requires rethinking generalization.
\newblock \emph{International Conference on Learning Representations}, 2017.

\bibitem[Zou and Hastie(2005)]{Zou2005-wm}
Hui Zou and Trevor Hastie.
\newblock Regularization and variable selection via the elastic net.
\newblock \emph{J. R. Stat. Soc. Series B Stat. Methodol.}, 67\penalty0
  (2):\penalty0 301--320, 2005.
\newblock ISSN 1369-7412, 1467-9868.

\end{thebibliography}


\begin{thebibliography}{2}
\providecommand{\natexlab}[1]{#1}
\providecommand{\url}[1]{\texttt{#1}}
\expandafter\ifx\csname urlstyle\endcsname\relax
  \providecommand{\doi}[1]{doi: #1}\else
  \providecommand{\doi}{doi: \begingroup \urlstyle{rm}\Url}\fi

\bibitem[Barron and Klusowski(2019)]{Barron2019-sb}
Andrew~R Barron and Jason~M Klusowski.
\newblock Complexity, statistical risk, and metric entropy of deep nets using
  total path variation.
\newblock \emph{arXiv}, February 2019.

\bibitem[Wainwright(2019)]{Wainwright2019-sp}
Martin~J Wainwright.
\newblock \emph{{High-Dimensional} Statistics: A {Non-Asymptotic} Viewpoint}.
\newblock Cambridge University Press, February 2019.
\newblock ISBN 9781108498029.

\end{thebibliography}
\end{document}


\maketitle
\appendix
\section{Sparse-input hierarchical network definition}
\label{sec:define}
For the reader's convenience, the full set of equations that define a sparse-input hierarchical network $f_{W, b, \beta, \alpha}$ is given below:
\begin{align}
z_{1,i} &= \beta_i x_{i} & \forall i = 1,...,d\\
\boldsymbol{z}_{l} &= \phi(\boldsymbol{z}_{l - 1} W_{l-1} + \boldsymbol{b}_{l-1} ) & \forall l = 2,...,L - 1\\
\boldsymbol{\zeta}_l & = \boldsymbol{z}_{l } W'_{l} + \boldsymbol{b}'_{l} & \forall l = 1,...,L - 1 \\
f_{W, b, \beta, \alpha}(\boldsymbol{x}) &= \boldsymbol{z}_{L} = \phi_{\text{out}}\left(
\sum_{l=1}^{L-1} \frac{|\alpha_{l}|}{\sum_{l' = 1}^{L - 1}|\alpha_{l'}|} \boldsymbol{\zeta}_{l}
\right)
\label{eq:sier-net}
\end{align}

\section{Proofs}
\label{sec:proofs}

For convenience, we overload the notation as follows.
Consider a probability space $(\Omega, \mathcal{A}, P)$ with sample space $\Omega = \mathcal{X} \times \mathcal{Y}$, $\sigma$-algebra $\mathcal{A}$, and probability measure $P$.
For a real-valued function $f: \mathcal{X} \times \mathcal{Y} \mapsto \mathbb{R}$, let $P f(X, Y)$ denote the expected value of $f(X, Y)$.
For a set-valued function $g: \mathcal{X} \times \mathcal{Y} \mapsto \mathcal{A}$, let $P g(X, Y)$ denote the probability measure of $g(X, Y)$.

The proof for Theorem~\ref{thrm:upper} uses standard empirical process techniques.
Recall that for a given distribution $\mu$, the $\mu$-complexity of function class $\mathcal{F}$ for sample $(X_1,...,X_n)$ is defined as
\begin{align}
E\left[\sup_{f \in \mathcal{F}} \frac{1}{n} \sum_{i=1}^n \xi_i (f(X_i) - f^*(X_i)) \mid X_1,...,X_n \right]
\label{eq:complexity}
\end{align}
where $\xi \sim \mu$ are independent and identically distributed (IID).
The Gaussian complexity of $\mathcal{F}$ refers to the case when $\mu = N(0,1)$.
Let $\mathcal{G}_n(\mathcal{F})$ denote the uniform bound of the Gaussian complexity of $\mathcal{F}$ over all samples from $\mathcal{X}$.
The Rademacher complexity of $\mathcal{F}$ refers to the case when $\mu$ is the Rademacher distribution.
Analogously, let $\mathcal{R}_n(\mathcal{F})$ denote the uniform bound of the Rademacher complexity of $\mathcal{F}$ over all samples from $\mathcal{X}$.

We will use the $\mu$-complexity bounds derived in \citet{Barron2019-sb}.
We note that their bounds technically pertain to dense ReLU neural networks, not sparse-input hierarchical networks.
Nevertheless, it is easy to show that any sparse-input hierarchical network can be expressed as a dense ReLU neural network, albeit with additional hidden nodes per layer.
Thus their derived complexity bounds are also valid for sparse-input hierarchical networks.

\begin{proof}[Proof for Theorem~\ref{thrm:upper}]
	Let $P_n$ be the empirical distribution of the training data.
	Because $\hat{\theta}$ is a global empirical risk minimizer, we have that
	\begin{align*}
	P \left\{ (f_{\hat{\theta}_n}(X) -  Y)^2 - (f_{\theta^*}(X) - Y)^2 \right \}
	& \ \le
	\left(P - P_n \right )
	\left(
	(f_{\hat{\theta}_n}(X) -  Y)^2 - (f_{\theta^*}(X) - Y)^2
	\right)\\
	& \ =
	\left(P - P_n \right )
	\left(
	(f_{\hat{\theta}_n}(X) -  f_{\theta^*}(X))^2 - \epsilon (f_{\theta^*}(X) - Y)
	\right).
	\end{align*}
	Then for any $j = 1,\cdots,d$, we have that
	\begin{align}
	& P\left(
	|s - \hat{s}| \ge j
	\right)\\
	\le &
	P\left(
	P \left\{ (f_{\hat{\theta}_n}(X) - Y)^2 - (f_{\theta^*}(X) - Y)^2 \right \}
	\ge \gamma(j ; P)
	\right)\\
	\le&
	P\left(
	\left(P - P_n \right )
	\left(
	(f_{\hat{\theta}_n}(X) -  f_{\theta^*}(X))^2 - \epsilon (f_{\hat{\theta}_n}(X) - f_{\theta^*}(X))
	\right)
	\ge \gamma(j ; P)
	\right)\\
	\le&
	P\left(
	\left[
	\sup_{\theta \in \Theta_{L, B_0, B_1}}
	\left(P - P_n \right )
	(f(X) -  f_{\theta^*}(X))^2
	\right]
	+
	\left[
	\sup_{\theta \in \Theta_{L, B_0, B_1}}
	\left(P - P_n \right )
	\epsilon (f(X) - f_{\theta^*}(X))
	\right]
	\ge \gamma(j ; P)
	\right).
	\end{align}
	For convenience, let $\mathcal{F}_{L, B_0, B_1} = \{f_\theta:  \theta \in \Theta_{L, B_0, B_1}\}$.
	We can bound the above probability by the union bound
	\begin{align}
	P\left(
	|s - \hat{s}| \ge j
	\right)
	&\  \le
	P\left(
	4B_0 \mathcal{R}_n (\mathcal{F}_{L, B_0, B_1}) + \delta
	+
	\sigma\mathcal{G}_n (\mathcal{F}_{L, B_0, B_1}) + \delta
	\ge \gamma(j; P)
	\right)\\
	&\ + P\left(
	\sup_{f \in \mathcal{F}_{L, B_0, B_1}}
	\left(P - P_n \right )
	(f(X) -  f_{\theta^*}(X))^2
	- 4B_0 \mathcal{R}_n (\mathcal{F}_{L, B_0, B_1})
	\ge \delta
	\right)
	\label{eq:rademacher}
	\\
	&\ + P\left(
	\sup_{f \in \mathcal{F}_{L, B_0, B_1}}
	\left(P - P_n \right )
	\epsilon (f(X) - f_{\theta^*}(X))
	- \sigma \mathcal{G}_n (\mathcal{F}_{L, B_0, B_1})
	\ge \delta
	\right)
	\label{eq:gaussian}
	\end{align}

	So with minor modifications of the Rademacher complexity to handle squared of functions with infinity-norm bounded by $B_0$, we have that \eqref{eq:rademacher} is bounded by $\exp\left(-\frac{n \delta^2}{2 B_0^4} \right)$.
	Likewise, since $\epsilon$ is a mean-zero Gaussian RV with variance $\sigma^2$, then \eqref{eq:gaussian} is bounded by $2\exp\left(
	-\frac{n \delta^2}{8B_0^2 \sigma^2}
	\right)$.
	Combining the above inequalities, we have for all $j = 1,\cdots, d$ that
	\begin{align}
	P\left(
	|s - \hat{s}| \ge j
	\right)
	&\  \le
	\mathbbm{1}\left\{
	4B_0 \mathcal{R}_n (\mathcal{F}_{L, B_0, B_1}) + \sigma\mathcal{G}_n (\mathcal{F}_{L, B_0, B_1}) + 2\delta
	\ge \gamma(j ; P)
	\right \}
	\label{eq:complexities}
	\\
	& \ + \exp\left(
	-\frac{n \delta^2}{2 B_0^4}
	\right)\\
	& \ + 2\exp\left(
	-\frac{n \delta^2}{8B_0^2 \sigma^2}
	\right).
	\end{align}
	Finally, since \citet{Barron2019-sb} proved that $\mathcal{G}_n (\mathcal{F}_{L, B_0, B_1})$ and $\mathcal{R}_n (\mathcal{F}_{L, B_0, B_1})$ are both bounded by $B_1 \sqrt{2(L \log(2) + \log (2d))/n}$, we can plug this into \eqref{eq:complexities} and obtain our desired result.
\end{proof}

Our proof for Theorem~\ref{thrm:lower} depends on the following lemma, which is a straightforward adaptation of Proposition 15.1 in \citet{Wainwright2019-sp}.
For any positive integer $m \le \min(k, \lfloor d/2 \rfloor )$, select two distributions $P_{(1)}, P_{(2)} \in \mathcal{P}_{k, \gamma}$ such that
\begin{align}
\Delta(P_{(1)}, P_{(2)}) = \min(|s(\mu_{P_{(1)}}) \setminus s(\mu_{P_{(2)}})|, |s(\mu_{P_{(2)}}) \setminus s(\mu_{P_{(1)}})|) \ge 2m.
\label{eq:diff_support}
\end{align}
Let $Q$ denote the joint distribution over the pair of random variables $(Z, Y)$ generated using the following procedure:
\begin{enumerate}
	\item Sample $J$ from $\{1,2\}$ uniformly at random.
	\item Given  $J = j$, sample $X_1,....,X_n$ iid from $P_{(j)}$.
\end{enumerate}
Define a testing function $\psi$ as a mapping from $\mathcal{X}^n \mapsto \{1,2\}$.

\begin{lemma}
For any positive integer $m \le \min(k, d/2)$, we have that
\begin{align}
\inf_{\hat{f}_n: |s(\hat{f}_n)| \le k}
\sup_{P \in \mathcal{P}_{k, \gamma}}
P\left(
\left|
s(\mu_{P}) \setminus s(\hat{f}_n)
\right|
\ge m
\right)
\ge \inf_{\psi} Q \left (\psi(X_1,....,X_n) \ne J \right )
\end{align}
where the infimum ranges over all test functions.
\label{lemma:minimax}
\end{lemma}

\begin{proof}
For any $M$, we have that
\begin{align}
\sup_{P \in \mathcal{P}_{k, \gamma}}
P\left(
\left|
s(\mu_P) \setminus s(\hat{f}_n)
\right|
\ge m
\right)
\ge
\frac{1}{2} \sum_{j=1}^2
P_{(j)}\left( \left| s(\mu_{P_{(j)}}) \setminus s(\hat{f}_n) \right | \ge m \right).
\end{align}
Define the testing function $\psi(X_1,...,X_n) = \argmin_{j \in \{1,2\}} \left| s(\mu_{P_{(j)}}) \setminus s(\hat{f}_n) \right |$.
We next show that if $\mu_{P_{(j)}}$ is the true distribution, the event $\left [\left| s(\mu_{P_{(j)}}) \setminus s(\hat{f}_n) \right | \le m \right ]$ implies that $\psi(X_1,...,X_n) = j$.
Because $\hat{f}_n$ can choose at most $k$ elements in the support and the assumption that $\mu_{P_{(1)}}$ and $\mu_{P_{(2)}}$ differ by at least $2m$ elements in their support, we have that if $\left| s(\mu_{P_{(j)}}) \setminus s(\hat{f}_n) \right | \le m$ for some $j = 1,2$, then $\left| s(\mu_{P_{(j')}}) \setminus s(\hat{f}_n) \right | \ge m$ for $j' \ne j$.
Moreover, this implies that $\psi(X_1,...,X_n) = j$.
Thus, we have established that
\begin{align}
\frac{1}{2} \sum_{j=1}^2
P_{(j)}\left( \left| s(\mu_{P_{(j)}}) \setminus s(\hat{f}_n) \right | \ge m \right)
\ge Q\left (\psi(X_1,....,X_n) \ne J \right ).
\end{align}
Finally, take the infimum with respect to all estimators on the left hand side and the infimum over all induced tests on the right hand side.
Since the full infimum can only be smaller, we have established the desired result.
\end{proof}

\begin{proof}[Proof for Theorem~\ref{thrm:lower}]
From Le Cam's inequality, for any $P_{(1)}, P_{(2)} \in \mathcal{P}_{k, \gamma}$ that satisfy \eqref{eq:diff_support}, we have that
\begin{align}
\inf_{\psi} Q\left (\psi(X_1,....,X_n) \ne J \right ) \ge \frac{1}{2}
\left( 1 - \left \| P_{(1)}^n - P_{(2)}^n \right \|_{TV} \right),
\label{eq:le_cam}
\end{align}
where $\|\cdot\|_{TV}$ is the total variation norm for probability distributions.
We then lower bound the right hand side by relating the KL-divergence to the total variation norm as follows:
\begin{align}
\left \| P_{(1)}^n - P_{(2)}^n \right \|_{TV} \le \sqrt{\frac{1}{2} D(P_{(1)}^n || P_{(2)}^n)} =\sqrt{\frac{n}{2} D(P_{(1)} || P_{(2)})}.
\label{eq:kl_tv}
\end{align}
Since we assumed that $\epsilon \sim N(0,\sigma^2)$, the squared error loss $\ell$ is equal to the negative log likelihood scaled by $\sigma^2$.
Thus, $\gamma(2m)/ \sigma^2$ is the minimum KL-divergence between two functions in $\mathcal{P}_{k, \gamma}$ with support differing by $2m$, i.e.
\begin{align}
\inf_{P_{(1)}, P_{(2)} \in \mathcal{P}_{k, \gamma}: \Delta(P_{(1)}, P_{(2)})\ge 2m } D(P_{(1)} || P_{(2)}) = \gamma(2m)/\sigma^2 .
\label{eq:gamma_bound}
\end{align}
Combining the above results with Lemma~\ref{lemma:minimax}, we have that
\begin{align}
\inf_{\hat{f}_n: |s(\hat{f}_n)| \le k}
\sup_{P \in \mathcal{P}_{k, \gamma}}
P\left(
\left|
s(\mu_P) \setminus s(\hat{f}_n)
\right|
\ge m
\right)
& \ge
\sup_{P_{(1)}, P_{(2)} \in \mathcal{P}_{k, \gamma}: \Delta(P_{(1)}, P_{(2)}) \ge 2m }
\frac{1}{2}
\left( 1 - \left \| P_{(1)}^n - P_{(2)}^n \right \|_{TV} \right)\\
& \ge
\frac{1}{2} \left(
1 - \sqrt{\frac{n}{2} \gamma(2m)/\sigma^2 }
\right),
\end{align}
where the first inequality follows from taking the supremum over the right hand side of \eqref{eq:le_cam}.
\end{proof}

\section{Hyperparameter default values}
\label{sec:hyperparam}

We recommend selecting the size of the ensemble $B$ to be sufficiently large for its predicted value to plateau.
We found that $B = 20$ worked well in all our experiments.
We initialize the sparse-input hierarchical networks to be sufficiently wide and deep to obtain a small training loss.
In the empirical analyses, we use 5 hidden layers and 100 hidden nodes per layer.
To perform penalized empirical minimization, we run Adam with the default learning rates and parameters until convergence and then batch proximal gradient descent until convergence.
We use a mini-batch size that is one third of the total size of the data.

To speed up K-fold cross-validation, one could try to fit a smaller ensemble for each candidate penalty parameter set.
In our work, we used $B = 10$ for tuning the penalty parameters.

\begin{table}
	\centering
	{\small
		\begin{tabular}{ccccc}
			Dataset & \# features & \# observations & \# classes & Held-out proportion\\
			\toprule
			\multicolumn{5}{c}{\textit{Classification}}\\
			soybean & 35 & 307 & 19 & 1/4 \\
			arrythmia & 279 & 452 & 13& 1/4 \\
			gene cancer C & 20531 & 801 & 5& 1/3 \\
			hill valley & 100 & 606 & 2 & 1/3\\
			semeion & 256 & 1593 & 10 & 1/3\\
			\midrule
			\multicolumn{5}{c}{\textit{Regression}}\\
			boston & 13 & 506 & --& 1/3\\
			gene cancer R & 20530 & 801 & -- & 1/3\\
			ct slices & 384 & 53500& --& 1/3\\
			crime & 122 & 1994 & --& 1/3\\
			iran house & 103 & 372 & --& 1/3\\
			wine & 11 & 4898 & --& 1/3
		\end{tabular}
	}
	\caption{
		Summary statistics for the selected datasets from the UCI Machine Learning Repository.
		The datasets were chosen to represent varying dataset shapes and sizes.
		One third of the data was held out for testing unless a random split resulted in no samples from a particular class.
		The gene cancer regression (gene cancer R) task is a derivative of the gene cancer classification (gene cancer C) task, where we try to predict the expression value for the first gene in the dataset instead of the type of cancer.
	}
	\label{table:dataset_descrips}
\end{table}

\bibliographystyle{plainnat}
\bibliography{main}